\definecolor{darkblue}{rgb}{0.0,0.0,0.65}
\definecolor{darkred}{rgb}{0.68,0.05,0.0}
\definecolor{darkgreen}{rgb}{0.0,0.29,0.29}
\definecolor{darkpurple}{rgb}{0.47,0.09,0.29}
\title{Outlier-robust Estimation of a Sparse Linear Model Using Invexity}
\date{}
\author{%
  Adarsh Barik\\
  Department of Computer Science\\
  Purdue University\\
  West Lafayette, IN 47906\\
  \texttt{abarik@purdue.edu} \\
  \and
  Jean Honorio\\
  Department of Computer Science\\
  Purdue University\\
  West Lafayette, IN 47906\\
  \texttt{jhonorio@purdue.edu} \\
}
\newtheorem{theorem}{Theorem}[section]
\newtheorem{lemma}[theorem]{Lemma}
\theoremstyle{definition}
\newtheorem{definition}[theorem]{Definition}
\newtheorem{assumption}[theorem]{A}
\theoremstyle{remark}
\newtheorem{observation}{Observation}
\newcommand{\calD}{\mathcal{D}}
\newcommand{\calO}{\mathcal{O}}
\newcommand{\calM}{\mathcal{M}}
\newcommand{\calJ}{\mathcal{J}}
\newcommand{\T}{\intercal}
\newcommand{\inner}[2]{\langle #1 \,, #2 \rangle}
\newcommand{\real}{\mathbb{R}}
\newcommand{\E}{\mathbb{E}}
\DeclareMathOperator*{\supp}{Supp}
\newcommand{\uvartheta}{\underline{\widehat{\vartheta}}}
\newcommand{\prob}{\mathbf{P}}
\DeclareMathOperator*{\inertia}{In}
\DeclareMathOperator*{\eig}{eig}
\begin{document}
\maketitle

\begin{abstract}
In this paper, we study problem of estimating a sparse regression vector with correct support in the presence of outlier samples. The inconsistency of lasso-type methods is well known in this scenario. We propose a combinatorial version of outlier-robust lasso which also identifies clean samples. Subsequently, we use these clean samples to make a good estimation. We also provide a novel invex relaxation for the combinatorial problem and provide provable theoretical guarantees for this relaxation. Finally, we conduct experiments to validate our theory and compare our results against standard lasso.
\end{abstract}

\section{Introduction}
\label{sec:introduction}

Many modern day machine learning problems use linear regression as one of their most basic procedures. In high-dimensional setting, a sparse variant of linear regression -- sparse linear regression -- is used to predict the response variable using the high dimensional predictors. In its simplest form, the task is to estimate a sparse regression parameter vector using a given set of predictors and response variables. To ensure sparsity, the most natural approach to solve this problem is to use lasso, i.e, add an $\ell_1$-regularized term to the least square loss minimization. However, in practice, often times we receive samples which do not conform to the linear regression model due to natural or adversarial corruption of both predictors and response~\citep{hampel1986robust}. These samples are treated as outlier samples. Many authors have studied the performance of lasso-type methods in the presence of outlier samples and found that they are not robust~\citep{hadi2009sensitivity}. \cite{fan2001variable,zou2006adaptive} have shown that variable selection for lasso can be inconsistent. While \cite{fan2001variable} proposed a non-convex penalized likelihood based method, \cite{zou2006adaptive} came up with adaptive lasso. Some researchers~\citep{lambert2011robust,sun2020adaptive,dalalyan2019outlier,d2021consistent} have used Huber criterion~\citep{huber2011robust} to deal with outlier samples. 

In this work, we are interested in estimating the sparse regression vector and correctly recovering its support for outlier-robust lasso. These questions have been well studied for lasso and standard sample complexity results are available for it~\citep{wainwright2009sharp,meinshausen2006high}. \cite{chen2013robust}, in their seminal work, have provided some nonconvex trimming methods for this purpose. They also show that convex optimization problems can not be used for outlier-robust support recovery. In this work, we provide an invex formulation for outlier-robust support recovery and provide theoretical guarantees for both estimation of the sparse vector and correct support recovery.           

Our work is inspired by the following observations:

\begin{observation}
	We need \textbf{at least} $\calO(k \log p)$ clean samples to recover a good approximation of the sparse regression vector with exact support where $k$ is the number of non-zero entries in regression vector and $p$ is its dimension~\citep{wainwright2009sharp}. This is also sufficient and necessary number of samples required for correct support recovery in standard lasso. Thus, we cannot hope to do any better than this. 
\end{observation}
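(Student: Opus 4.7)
The observation bundles two classical results of \cite{wainwright2009sharp}: a necessary-condition lower bound of $\Omega(k \log p)$ clean samples for any exact-support recovery procedure, and a matching sufficient-condition upper bound achieved by lasso. I would establish each direction separately.

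For the \emph{necessity}, my plan is an information-theoretic / Fano-style argument. First I would reduce support recovery to multi-way hypothesis testing over a packing of supports: fix a minimum signal magnitude $\beta_{\min}$ and, for each $S \subseteq \{1,\dots,p\}$ with $|S|=k$, form the hypothesis vector that places $\beta_{\min}$ on coordinates in $S$ and zero elsewhere. The number of hypotheses is $M = \binom{p}{k}$, so $\log M \ge k \log(p/k) = \Omega(k\log p)$ when $k \ll p$. Next, for i.i.d.\ Gaussian predictors and Gaussian noise, the KL divergence between any two such hypotheses factorizes across samples and scales like $n \beta_{\min}^2$ up to design-matrix constants. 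Combining this with Fano's inequality shows that any estimator whose probability of misidentifying the support vanishes must take $n = \Omega(k \log p / \beta_{\min}^2)$ samples.

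For the \emph{sufficiency} in lasso, my plan is the primal-dual witness construction. First I would solve the lasso restricted to the true support $S$; this is a strongly convex $k$-dimensional program thanks to a restricted eigenvalue condition on the Gaussian design, which holds with high probability once $n = \Omega(k\log p)$. Second, I would extend this restricted solution by zeros outside $S$ and verify the full KKT conditions by checking strict dual feasibility: for every $j \notin S$, the subgradient entry has magnitude $< 1$. Under the mutual incoherence / irrepresentable condition (satisfied by Gaussian designs with high probability), the worst inner product of an inactive column with the noise residual is controlled by a Gaussian tail bound plus a union bound over $p-k$ coordinates, which produces exactly the $\log p$ factor matching the lower bound. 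Finally, an $\ell_\infty$ bound on the error over $S$, combined with the minimum signal assumption $\min_{j \in S} |\beta^*_j| > \beta_{\min}$, certifies both the correct support and a good estimate.

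The delicate step, and the main obstacle, is the strict-dual-feasibility check in the primal-dual witness. One must show that the cross term coupling the noise with the off-support columns is bounded strictly below $1$; the bound is tight only when $n$ scales at least as $k \log p$, and achieving this sharp constant requires careful control of the variance of Gaussian quadratic forms together with a sharp union-bound over the $p-k$ inactive coordinates. This is precisely the step that aligns the achievability bound with the Fano-based lower bound and that makes $\Omega(k\log p)$ both necessary and sufficient for standard lasso.
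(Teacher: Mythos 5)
The paper offers no proof of this observation; it is stated as a citation of \citep{wainwright2009sharp}, and your two-part sketch --- a Fano-type lower bound over the $\binom{p}{k}$ support hypotheses for necessity, and the primal-dual witness construction with mutual incoherence and a union bound over the $p-k$ inactive coordinates for lasso achievability --- is precisely the standard argument from that reference. Your plan is correct, and the achievability half is the same primal-dual witness machinery that this paper later adapts to its invex relaxation in Theorem~\ref{thm:main theorem}.
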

\begin{observation}
	Outlier-robust estimation of exact support of sparse regression vector is impossible using convex optimization-based methods. This result follows from~\cite{chen2013robust}.
\end{observation}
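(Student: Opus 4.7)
The statement is an impossibility result, so the natural plan is a two-point (indistinguishability) argument combined with a structural observation about convex programs. I would begin by fixing the adversarial model underlying outlier-robust lasso: start with a clean sample set $(x_i, y_i)_{i=1}^n$ drawn from $y_i = x_i^\top \beta^* + \mathrm{noise}$, let an adversary replace an arbitrary $\epsilon$-fraction of these pairs, and define the ``support recovery'' task as producing $\widehat{S}$ with $\widehat{S} = \mathrm{supp}(\beta^*)$ with high probability. The argument from \cite{chen2013robust} to adapt then proceeds by explicitly constructing two sparse vectors $\beta^*$ and $\tilde{\beta}$ with $\mathrm{supp}(\beta^*) \ne \mathrm{supp}(\tilde{\beta})$ together with two adversarial strategies, one for each, such that the corrupted empirical distributions coincide. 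Since no algorithm can distinguish these, correct support recovery is already ruled out information-theoretically; the more delicate part is specializing this to show that even with infinite samples or oracle access to the clean distribution, convexity alone forces failure.

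For the convex-specific part I would model any candidate algorithm as the minimizer of a convex objective $F(\beta;\{(x_i,y_i)\}) + R(\beta)$, where $F$ is convex in $\beta$ and separable over samples and $R$ is a convex (possibly non-smooth) sparsity-inducing regularizer. The crucial structural fact is that each outlier sample contributes an additive convex term whose subgradient at $\beta^*$ can be made to point in essentially any direction by adversarially choosing $(x_i,y_i)$. I would then pick outliers so that the combined subgradient of $F$ at $\beta^*$ lies outside the normal cone of the regularizer's sub-level set at $\beta^*$, forcing the unique minimizer off $\mathrm{supp}(\beta^*)$. Symmetrically, a different choice of outliers produces the same behavior at $\tilde{\beta}$. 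Because the two corrupted problems are indistinguishable but no convex program can simultaneously place its optimum on both supports, no convex-based estimator can recover the correct support uniformly over the adversary's choice.

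The main obstacle I expect is the universal quantifier ``for every convex method.'' Handling a single loss (say squared loss with $\ell_1$ regularization) is routine---one simply notes that the Karush--Kuhn--Tucker conditions at $\beta^*$ can be violated by a carefully placed outlier. To promote this to a statement about all convex surrogates, I would abstract out the common structure: convex empirical risk minimizers satisfy a first-order optimality condition that is \emph{additive} in the samples, so the adversary's corrupted samples translate directly into a perturbation of the subgradient at the true solution. Proving that this perturbation can always be made large enough to flip the active set, regardless of the specific convex loss, is where the most careful work lies; once that is done, combining it with the two-point indistinguishability reduction closes the impossibility argument and justifies the move to a non-convex (here, invex) formulation in the remainder of the paper.
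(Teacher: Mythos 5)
The paper offers no proof of this observation at all: it is stated as a corollary of Theorem~1 of \cite{chen2013robust} and the citation is the entire argument. So the comparison here is between your sketch and the known argument of Chen et al., not a proof in this paper.

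Your second paragraph is essentially the right argument, but your first paragraph contains a step that would fail and, worse, proves too much. If you construct two sparse vectors with different supports and two adversarial strategies under which the \emph{corrupted empirical distributions coincide}, then no estimator of any kind --- convex, invex, or exhaustive search --- can recover the support, which would contradict Theorem~\ref{thm:main theorem} of this very paper. The reason no contradiction arises is that such two-point constructions are excluded by the paper's outlier model: Definition~\ref{def:outlier} forces every outlier to incur loss at least $\rho$ larger than every clean sample at $\theta^*$, so the ``outliers'' manufactured for one hypothesis would fit it too well to qualify as outliers. The impossibility must therefore be convexity-specific, and the information-theoretic reduction should be dropped entirely rather than ``combined'' with the convex argument.

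The part that does the real work is the one you relegate to the second half: for an M-estimator $\min_\beta \sum_i \ell(x_i,y_i,\beta) + R(\beta)$ with $\ell$ convex in $\beta$, a single corrupted row contributes an additive convex term whose subgradient at any candidate point can be made not merely to point in an arbitrary direction but to have \emph{arbitrarily large magnitude} (a convex function is unbounded along some ray unless it is constant, so scaling $x_i$ or $y_i$ scales the contribution without bound). Hence a constant number of outliers can always dominate the aggregate clean-sample subgradient and violate dual feasibility at the true support, no matter how many clean samples there are. That unboundedness is the crux --- it is exactly what bounded, trimmed, or non-convex losses avoid, and it is why the quantifier over all convex losses can be handled uniformly. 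Your sketch gestures at the direction of the adversarial subgradient but not at its unbounded scale, and that is the missing ingredient that makes the argument go through for every convex surrogate simultaneously.
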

\begin{observation}
	If the loss incurred by the outlier sample is less than or equal to the loss incurred by the clean sample, then it is impossible to distinguish between clean and outlier samples. If the clean samples can be identified then they can be used to estimate the regression vector.    
\end{observation}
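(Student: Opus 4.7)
The plan is to treat the observation as two separate informal claims and argue each with a short, self-contained sketch, since the statement is more folklore than a precise theorem.

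For the first assertion, my approach would be an indistinguishability argument. Fix any candidate regression vector $\beta$ and, for each sample $i$, compute its per-sample loss $\ell_i(\beta)$. Suppose $o$ is an outlier sample and $c$ is a clean sample with $\ell_o(\beta) \leq \ell_c(\beta)$. Construct two hypotheses over the joint data-generating process: in $H_1$, sample $o$ is the outlier and $c$ is clean; in $H_2$, the labels of $o$ and $c$ are swapped. Any identification rule that depends only on the collection of losses $(\ell_i(\beta))_i$ is invariant under this relabeling, because the multiset of losses is unchanged. Hence no such loss-based rule can reliably distinguish $H_1$ from $H_2$. This is the precise sense in which outliers cannot be detected when their losses are dominated by those of clean samples.

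For the second assertion, the argument reduces to a direct appeal to Observation~1 and standard lasso theory. Once the clean subset of samples is identified, the residual problem is an ordinary sparse linear regression on that subset; by \cite{wainwright2009sharp}, $\mathcal{O}(k \log p)$ clean samples suffice for both support recovery and a sharp $\ell_2$ estimate of the regression vector. No new machinery is required, since the outliers have been removed from the optimization.

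The main obstacle, such as it is, lies in the first part: one must make explicit that ``indistinguishable'' is relative to any identification procedure whose only input is the per-sample losses under the model, rather than relative to an adversary with arbitrary side information. Once that scope is fixed, the swapping argument is immediate and there is no technical work to grind through. The second part is essentially a citation.
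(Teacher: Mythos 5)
The paper states this observation purely as an informal motivating remark and offers no proof of it, so there is no official argument to measure yours against. Your sketch is correct and is essentially the natural formalization: the swapping/indistinguishability argument is the right way to make the first claim precise (and your caveat about restricting attention to procedures whose only input is the per-sample losses is exactly the scoping the paper implicitly adopts --- its definition of outlier samples imposes the complementary gap condition $f(X_o, y_o, \theta^*) - f(X_c, y_c, \theta^*) \geq \rho$ precisely to exclude the indistinguishable case), while the second claim is, as you say, just an appeal to the standard clean-sample lasso guarantees of \cite{wainwright2009sharp}, which is also how the paper uses it.
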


Based on these observations, we propose a non-convex yet invex relaxation of robust lasso. Invexity keeps it tractable and allows us to provide theoretical guarantees. \cite{lozano2016minimum} have also used invex loss function in their work but their analysis do not rely on invexity of the function. They also do not identify clean samples and provide no guarantees for support recovery. 

\paragraph{Contributions.}
In summary, our contributions can be listed as below:
\begin{enumerate}
\setlength{\itemsep}{0pt}
	\item We provide an invex relaxation for outlier-robust lasso which also identifies clean samples (Section~\ref{sec:invex relaxation}).
	\item We provide theoretical guarantees for a good estimation of true regression vector with exact support recovery using only $\Omega(k^3 \log^2 p)$ clean samples (Theorem~\ref{thm:main theorem}). We employ a primal-dual witness framework for invex problem in our analysis which could be of independent interest. 
	\item Finally, we conduct numerical experiments to validate our theoretical results and compare it against the standard lasso, Adaptive lasso from \cite{lambert2011robust} and Robust lasso from \cite{chen2013robust} (Section~\ref{sec:experiments}). We show that our method outperforms other state-of-the-art methods.  
\end{enumerate}

\section{Data generative process}
\label{sec:data geenration}

In this section, we propose a data generative process for our problem. The data for our problem comes from two models. The first model is called clean model and the other is called outlier model. The clean model uses a true regression parameter vector $\theta^*$ while the outlier model does not conform to a linear model. We use squared loss function $f(X_i, y_i, \theta)$ as sample $(X_i, y_i)$'s measure of fit to a linear model parametrized by $\theta$, i.e., $f(X_i, y_i, \theta) = (y_i - \inner{X_i}{\theta})^2$.
We first describe the clean model below.

\paragraph{Clean model.}  The data generative process for clean model is defined as below:
\begin{align}
	\label{eq:clean model generative process}
	(\forall i \in \{1, \cdots, r\}) \quad y_i = \inner{X_i}{\theta^*} + e_i \; ,
\end{align}   
where $X_i, \theta^* \in \real^p$ and $y_i, e_i \in \real$. Here, $\theta^*$ is a $k$-sparse vector, i.e., it has at most $k$ non-zero entries. More formally, $\theta^*$ is picked from a set $\Theta$ which is defined as: $\Theta = \{ \theta \in \real^p \mid \| \theta \|_0 = k,\; \| \theta \|_1 \leq M \}$ for some $M \geq 0$.
We receive $r$ i.i.d. samples from the clean model and collect them in set $\calM_c$. Without loss of generality, we consider them to be the first $r$ samples. 
\begin{align}
	\label{eq:clean model}
	\calM_c = \{ (X_1, y_1), (X_2, y_2), \cdots, (X_r, y_r) \} \;.
\end{align}
We consider the predictors $X_i \in \real^p, \forall i \in \{ 1, \cdots, r \}$ to be a zero mean sub-Gaussian random vector~\citep{hsu2012tail} with covariance $\Sigma$. In particular, we assume that $\forall j \in \{1, \cdots, p\}, \frac{X_{ij}}{\sqrt{\Sigma_{jj}}}$ is a sub-Gaussian random variable with parameter $\sigma > 0$. The additive noise $e_i$ is sampled independently from a zero mean sub-Gaussian distribution with parameter $\sigma_e$. Next, we characterize our second model which is called $\rho$-outlier model.

\paragraph{$\rho$-Outlier model.} We receive $n - r$ samples from $\rho$-Outlier model which are collected in set $\calM_o$.
\begin{align}
	\label{eq:outlier model}
	\calM_o = \{ (X_{r+1}, y_{r+1}), (X_{r+2}, y_{r+2}), \cdots, (X_n, y_n) \} \;,
\end{align}
where $X_i \in \real^p$ and $y_i \in \real$ for all $i \in \{ r+1, \cdots, n \}$. These outlier samples do not conform to a linear model. However, we do not assume anything about the distribution of samples coming from $\calM_o$. However, we need to characterize outlier property of these samples mathematically. We provide the following necessary criteria for the samples in $\calM_o$. 

\begin{definition}
\label{def:outlier}
All the outlier samples $(X_o, y_o) \in \calM_o$ hold the property that for all $(X_c, y_c) \in \calM_c$, 
\begin{align}
	\label{eq:loss gap}
	f(X_o, y_o, \theta^*) - f(X_c, y_c, \theta^*) \geq \rho \;
\end{align}   
for a given $\rho > 0$.	
\end{definition}

The loss function $f$ provides a measure of fit for a sample to be a clean sample. If equation~\eqref{eq:loss gap} is violated for some samples $(X_o, y_o) \in \calM_o$ and $(X_c, y_c) \in \calM_c$, then an outlier sample from $\calM_o$ will fit the clean model better than a sample from $\calM_c$ which is not desirable. That way definition~\ref{def:outlier} is a natural definition for outlier samples. The following Figure~\ref{fig:row corruption} illustrates our data generative process.

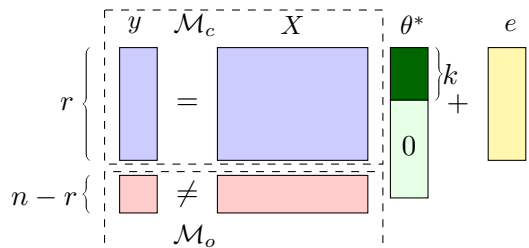
\begin{figure}[!ht]
	\begin{center}
	\begin{tikzpicture}
		\draw[fill=blue!20] (0, 0) rectangle (0.5, -1.5);
		\draw[decorate, decoration = {calligraphic brace, mirror}] (-0.4, 0) -- (-0.4, -1.5);
		\node[] at (-0.7, -0.75) {$r$};
		\draw[fill=red!20] (0, -1.7) rectangle (0.5, -2.2);
		\draw[decorate, decoration = {calligraphic brace, mirror}] (-0.4, -1.7) -- (-0.4, -2.2);
		\node[] at (-1, -2) {$n - r$};
		\draw[fill=blue!20] (1.3, 0) rectangle (3.3, -1.5);
		\draw[fill=red!20] (1.3, -1.7) rectangle (3.3, -2.2);
		\draw[fill=green!40!black] (3.6, 0) rectangle (4.1, -0.7);
		\draw[decorate, decoration = {calligraphic brace}] (4.2, 0) -- (4.2, -0.7);
		\node[] at (4.4, -0.35) {$k$};
		\draw[fill=green!10] (3.6, -0.7) rectangle (4.1, -2);
		\draw[fill=yellow!40] (4.9, 0) rectangle (5.4, -1.5);	
		\node[] at (0.9, -0.75) {$=$};	
		\node[] at (0.9, -1.95) {$\ne$};
		\node[] at (4.5, -0.75) {$+$};
		\node[] at (3.85, -1.3) {$0$};
		\draw[dashed] (-0.2, 0.5) rectangle (3.5, -1.55); 	
		\draw[dashed] (-0.2, -1.65) rectangle (3.5, -2.7); 
		\node[] at (1, 0.3) {\small{$\calM_c$}};
		\node[] at (1, -2.5) {\small{$\calM_o$}};
		\node[] at (0.2, 0.3) {\small{$y$}};
		\node[] at (2.3, 0.3) {\small{$X$}};
		\node[] at (3.9, 0.3) {\small{$\theta^*$}};
		\node[] at (5.2, 0.3) {\small{$e$}};								
	\end{tikzpicture}
	\end{center}
	\caption{\label{fig:row corruption} Illustration of our data generative process}
\end{figure}

Our data generative process falls under the ``row corruption'' model from \cite{chen2013robust}. In that sense, a sample can be made outlier either by corrupting the predictors or the response (or both). Theorem 1 from \cite{chen2013robust} shows that correct support recovery is not possible for our problem using any convex optimization approach. Thus, in the next section, we propose a non-convex yet tractable optimization problem to handle this issue. Our model works on a similar principle as in the Huber loss which is used frequently in outlier-robust lasso models~\citep{lambert2011robust,sun2020adaptive,dalalyan2019outlier,d2021consistent}. Note that Huber loss is defined as
\begin{align}
    \label{eq:huber loss}
    L(X_i, y_i, \theta) = \begin{cases}
        \frac{1}{2} f(X_i, y_i, \theta),\; \text{when $|\sqrt{f(X_i, y_i, \theta)}| \leq \delta$} \\
        \delta (1 - |\sqrt{f(X_i, y_i, \theta)}| - \frac{1}{2} \delta ), \; \text{otherwise}
    \end{cases}\; .
\end{align}
This model reduces sensitivity towards outlier samples. Essentially, it treats a sample $(X_i, y_i)$ as an outlier when  $f(X_i, y_i, \theta) > \delta^2$. When $f(X_i, y_i, \theta) = \delta^2$ 
, the point can be treated either as an outlier or as a clean sample. Our model uses the same principle and classifies a point as an outlier when $f(X_i, y_i, \theta) \geq \delta^2 + \rho$ where $\rho > 0$. The positive $\rho$ ensures that there is no confusion for points which have $f(X_i, y_i, \theta) = \delta^2$. Our results work well for large values of $\rho$. When $\rho$
 is small, then separating the clean samples from the outlier samples becomes difficult.

\section{Invex relaxation}
\label{sec:invex relaxation}

In this section, we propose an invex formulation for outlier-robust estimation of sparse regression parameter vector. Our task is to estimate $\theta^*$ using $(X, y) \in \real^{n \times p + 1}$. The main idea behind our formulation is that we want to use $m$ clean samples to estimate the regression vector. We know that there are $r$ samples from $\calM_c$ and $n - r$ samples from $\calM_o$. We do not need to know the exact value of $r$, but we need that $m \leq r$. We provide a bound on $m$ in our analysis. This leads to a combinatorial problem where we need to choose $m$ out of $n$ available samples. Mathematically, we can formulate the following continuous relaxation of this combinatorial optimization problem:   
\begin{align}
	\label{eq:non-convex optimization problem}
	\begin{matrix}
		\min_{b_1, \cdots, b_n, \theta} & \sum_{i=1}^n b_i f(X_i, y_i, \theta) + \lambda \| \theta \|_1\\
		\text{such that} & 0 \leq b_i \leq 1,\;  i \in \{1, \cdots, n\}, \; \sum_{i=1}^n b_i \geq m  
	\end{matrix} \;,
\end{align}  
where $\lambda \geq 0$ is a regularizer. The $\ell_1$-norm on $\theta$ is to ensure sparsity of the solution. The variable $b_i$ identifies membership of sample $i$ to $\calM_c$. Ideally, we expect exactly $m$ $b_i$'s to take the value $1$ and the remaining $b_i$'s to take the value $0$. Our first result shows that optimization problem~\eqref{eq:non-convex optimization problem} is jointly non-convex with respect to $b_1, \cdots, b_n, \theta$. 

\begin{theorem}
	\label{thm:non-convex}
	The optimization problem~\eqref{eq:non-convex optimization problem} is non-convex with respect to $b_1, \cdots, b_n, \theta$.
\end{theorem}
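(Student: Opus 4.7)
The plan is to prove non-convexity by exhibiting a simple explicit counterexample on the feasible set (which is itself a convex polytope, so non-convexity of the problem is equivalent to non-convexity of the objective on that set). The core observation is that the objective contains the product terms $b_i f(X_i, y_i, \theta) = b_i (y_i - \inner{X_i}{\theta})^2$, each of which is linear in $b_i$ and convex quadratic in $\theta$, but not jointly convex in $(b_i, \theta)$; the remaining term $\lambda \| \theta \|_1$ depends only on $\theta$ and therefore cannot repair this.

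Concretely, I would reduce to the smallest instance that still exercises the joint dependence: take $p = 1$, $n = 2$, $m = 1$, $\lambda = 0$, and pick $(X_1, y_1) = (1, 0)$ and $(X_2, y_2) = (0, 0)$. The objective then collapses to $g(b_1, b_2, \theta) = b_1 \theta^2$, and the feasible set is $\{(b_1, b_2, \theta) : 0 \leq b_1, b_2 \leq 1,\ b_1 + b_2 \geq 1,\ \theta \in \real\}$. Choose the two feasible points $(b_1, b_2, \theta) = (1, 1, 0)$ and $(b_1, b_2, \theta) = (0, 1, 2)$, both attaining objective $0$. Their midpoint $(0.5, 1, 1)$ is feasible and has objective $0.5 \cdot 1^2 = 0.5 > 0$, which violates Jensen's inequality and hence shows the objective is not jointly convex.

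As a more conceptual alternative (which I would mention briefly), one can compute the Hessian of a single summand $b_i (y_i - \inner{X_i}{\theta})^2$ with respect to $(b_i, \theta) \in \real \times \real^p$. It has the block form
\begin{align*}
H_i(b_i, \theta) = \begin{pmatrix} 0 & -2(y_i - \inner{X_i}{\theta}) X_i^\T \\ -2(y_i - \inner{X_i}{\theta}) X_i & 2 b_i X_i X_i^\T \end{pmatrix}.
\end{align*}
A symmetric block matrix whose top-left block is zero cannot be positive semidefinite unless the corresponding off-diagonal block also vanishes; so at any point with nonzero residual $y_i - \inner{X_i}{\theta}$ and nonzero $X_i$, $H_i$ has a strictly negative eigenvalue, confirming non-convexity. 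There is no real obstacle here; the only thing to be careful about is ensuring that the chosen counterexample actually lies in the feasible set (in particular that the sum constraint $\sum_i b_i \geq m$ is satisfied at both extreme points and at their midpoint), which is why I include the auxiliary sample indexed by $i = 2$ above.
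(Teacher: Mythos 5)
Your proof is correct, and it reaches the conclusion by a different (though equally standard) characterization of convexity than the paper does. The paper's own argument works with the first-order condition: it defines the gap $G(b,\bar b,\theta,\bar\theta) = g(b,\theta) - g(\bar b,\bar\theta) - \langle \nabla g(\bar b,\bar\theta), (b,\theta)-(\bar b,\bar\theta)\rangle$, evaluates it at $b=0$, $\bar b = (\tfrac12,\dots,\tfrac12)$ and one-sparse $\theta,\bar\theta$, and observes that the resulting expression $-\sum_i (y_i - X_{it}w)X_{it}(u-w)$ can be made negative by choice of $w$ for essentially arbitrary data $X \neq 0$; this matters because the same $G$ is reused (with a modified $\eta$) to establish invexity of the relaxation. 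Your midpoint/Jensen violation on the instance $g(b_1,b_2,\theta)=b_1\theta^2$ is cleaner and fully explicit, and your feasibility bookkeeping (the auxiliary sample so that $b_1+b_2\ge m$ holds at both endpoints and the midpoint) is exactly the care the argument needs. The one caveat is that the explicit counterexample establishes non-convexity only for a fabricated dataset, whereas the theorem is stated for the problem built from the paper's data; your Hessian remark closes that gap, since the zero $b$--$b$ block forces the off-diagonal block $-2(y_i-\langle X_i,\theta\rangle)X_i$ to vanish for positive semidefiniteness, which fails at any point with a nonzero residual and nonzero $X_i$ --- a condition met by any nontrivial dataset. I would therefore lead with the Hessian argument (or adapt the midpoint construction to the given data) if you want the statement in the same generality as the paper's proof.
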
 
\begin{proof}
	\label{proof:non-convex}
	For brevity, we collect all the $b_i$'s in a variable $b$, i.e., $b = (b_1, \cdots, b_n)$. It suffices to show the non-convexity of $g(b, \theta) = \sum_{i=1}^n b_i f(X_i, y_i, \theta)$ because it can be considered as a special case when $\lambda = 0$. Observe that $\frac{\partial g}{\partial b_i} = f(X_i, y_i, \theta), \; i \in \{1,\cdots, n\}, \; 
			\frac{\partial g}{\partial \theta} = -2 \sum_{i=1}^n b_i X_i (y_i - \inner{X_i}{\theta}) $.
	Next, we define the following function $G(b, \bar{b}, \theta, \bar{\theta}) = g(b, \theta) - g(\bar{b}, \bar{\theta}) - \sum_{i=1}^n \frac{\partial g(\bar{b}, \bar{\theta})}{\partial b_i} (b_i - \bar{b}_i) - \inner{\frac{\partial g(\bar{b}, \bar{\theta})}{\partial \theta} }{\theta - \bar{\theta}}$. If we can construct $(b, \theta)$ and $(\bar{b}, \bar{\theta})$ such that $G(b, \bar{b}, \theta, \bar{\theta})$ takes both positive and negative values, then we can show $g(b, \theta)$ to be a non-convex function by definition. To that end, we choose $b = (0, \cdots, 0), \bar{b} = (\frac{1}{2}, \cdots, \frac{1}{2})$, $\theta_i = u$ when $i = t$ and $0$ otherwise, and $\bar{\theta}_i = w$ when $i = t$ and $0$ otherwise.
	With this setting, we get $	G(b, \bar{b}, \theta, \bar{\theta}) = - \sum_{i=1}^n (y_i - X_{it} w) X_{it} (u - w)$ -- if $X_{it}$ happens to be $0$ then we simply change the index $t$. Now, if we fix $u$, we can always choose an appropriate $w$ to make $G(b, \bar{b}, \theta, \bar{\theta})$ take both positive and negative signs. This completes our proof.
\end{proof}

The non-convexity of optimization problem \eqref{eq:non-convex optimization problem} keeps our hopes alive that we can use it to estimate $\theta^*$ with correct support. However, it is also pertinent to discuss limitations of optimization problem \eqref{eq:non-convex optimization problem}. 

\paragraph{Limitations of optimization problem~\eqref{eq:non-convex optimization problem}.} To illustrate the limitations, we provide an adversarial approach to choose $\calM_o$ such that the solution to \eqref{eq:non-convex optimization problem} becomes a bad estimate of $\theta^*$. The adversary chooses an $\widehat{\theta} \in \Theta$ such that $\| \widehat{\theta} \|_1 \leq \| \theta^* \|_1$, $y_o = \inner{X_o}{\widehat{\theta}}, \; \forall (X_o, y_o) \in \calM_o$ and all the outlier samples follow definition~\ref{def:outlier}. The third condition is not that hard to satisfy as clean samples follow equation~\eqref{eq:clean model generative process} and thus, incur small loss with $\theta^*$. A simple example of this scenario would be to consider a mixture of two linear regression models. Now, if $\calM_o$ has at least $m$ samples, then the solution to \eqref{eq:non-convex optimization problem} will be a good estimate of $\widehat{\theta}$ but may not be a good estimate of $\theta^*$. Any outlier-robust model which uses a loss function that depends on $|y_i - \inner{X_i}{\theta}|$ would fail in this case -- this includes all models using Lasso type formulation~\citep{zou2006adaptive} and Huber loss~\citep{lambert2011robust,sun2020adaptive,dalalyan2019outlier,d2021consistent}. However, we can easily fix this problem by converting the problem to a mixed linear regression model~\citep{chen2014convex,yi2014alternating,barik2022sparse}. In fact, on a high level, our approach to outlier-robust lasso problem is similar to~\cite{barik2022sparse} in their use of primal-dual witness techniques~\citep{wainwright2009sharp,wainwright2019high,ravikumar2007spam,ravikumar2010high,daneshmand2014estimating}. However, our formulation is oblivious to the outlier distribution and accommodates extra constraint on number of clean samples in the formulation. This changes the KKT conditions and arguably leads to a more challenging setting where uniqueness of the solution is not guaranteed. In its current form, \eqref{eq:non-convex optimization problem} is difficult to solve. Next, we will provide a relaxation for \eqref{eq:non-convex optimization problem} which has provable theoretical guarantees. In particular, notice that $ f(X_i, y_i, \theta) = \inner{A_i}{\vartheta}$, where $A_i = \begin{bmatrix}
		X_i X_i^T & - X_i y_i \\ - y_i X_i^T & y_i^2
	\end{bmatrix}$ and $\vartheta = \begin{bmatrix}
	\theta \theta^T & \theta \\ 
	\theta^T & 1
\end{bmatrix}$. This leads to the following relaxation for \eqref{eq:non-convex optimization problem}:
\begin{align}
	\label{eq:invex relaxation}
	\begin{matrix}
		\min_{b_1,\cdots,b_n, \vartheta} & \sum_{i=1}^n \inner{b_i A_i}{\vartheta} + \lambda \| \vartheta \|_1 \\
		\text{such that} & \vartheta \succeq 0, \; \vartheta_{p+1, p+1} = 1 \\
		& 0 \leq b_i \leq 1, \; i \in \{1,\cdots,n\}, \; \sum_{i=1}^n b_i \geq m
	\end{matrix} \; ,
\end{align}
where $\| \vartheta \|_1$ is entry-wise $\ell_1$-norm of matrix $\vartheta \in \real^{(p+1) \times (p+1)}$. Next, we show that while optimization problem~\eqref{eq:invex relaxation} is still non-convex, it is tractable. Before that we provide a brief summary of a special class of non-convex functions called ``invex functions''.
\begin{definition}[Invex functions~\citep{hanson1981sufficiency}]
	Let $\phi(t)$ be a function defined on domain $\calD$. Let $\eta$ be a vector valued function defined in $\calD \times \calD$ such that $\inner{\eta(t_1, t_2)}{\nabla \phi(t_2)}$ is well defined for all $t_1, t_2 \in \calD$. Then, $\phi(t)$ is an $\eta$-invex function if 
	\begin{align}
		\label{eq:invexity definition}
		\phi(t_1) - \phi(t_2) \geq \inner{\eta(t_1, t_2)}{\nabla \phi(t_2)},\; \forall t_1, t_2 \in \calD
	\end{align}
\end{definition}

Invex functions can be seen as a generalization of convex function where $\eta(t_1, t_2)$ is simply $t_1 - t_2$. Recently, use of invexity to tackle machine learning problem has gained some traction~\citep{barik2021fair,pinilla2022improved,barik2022sparse}. It is known that KKT conditions are both necessary and sufficient conditions for optimality of invex problems~\citep{hanson1981sufficiency}. Furthermore, \cite{ben1986invexity} has proven that a function is invex if and only if each of its stationary point is a global minimum. We use vector $b \in \real^n$ to denote a vector with i-th entry as $b_i$.  For our problem, we define the domain $\calD$ of the problem as follows: $ \calD =  \{ (b, \vartheta) \mid \vartheta \succeq 0, \; \vartheta_{p+1, p+1} = 1, \; b \in [0, 1]^n, \sum_{i=1}^n b_i \geq m  \} \;$. Next, we show that the objective function of \eqref{eq:invex relaxation} is indeed invex in the domain $\calD$.

\begin{lemma}
	\label{lem:invex objective}
	For all $(b, \vartheta) \in \calD$ and $(\bar{b}, \bar{\vartheta}) \in \calD$, the function $h(b, \vartheta) = \sum_{i=1}^n \inner{b_i A_i}{\vartheta} + \| \vartheta \|_1 $ is invex, i.e., it satisfies the following: $ h(b, \vartheta) - h(\bar{b}, \bar{\vartheta}) - \sum_{i=1}^n \eta_{b_i} \frac{\partial h(\bar{b}, \bar{\vartheta})}{\partial b_i} - \inner{\eta_\vartheta}{\frac{\partial h(\bar{b}, \bar{\vartheta})}{\partial \vartheta}} \geq 0$, where $ 		\eta_{b_i} = \xi_i (b_i - \bar{b}_i), \; \xi_i = \frac{\inner{A_i}{\vartheta}}{\inner{A_i}{\bar{\vartheta}}} \; \forall i = \{1, \cdots, n\}$, and $\eta_\vartheta = \vartheta - \bar{\vartheta}$.
\end{lemma}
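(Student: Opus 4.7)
The plan is a direct algebraic verification, where essentially all the work is hidden in the clever choice of $\eta_{b_i}$. First I would compute the relevant (sub)gradients: $\partial h/\partial b_i(\bar b,\bar\vartheta)=\inner{A_i}{\bar\vartheta}$, and for the $\vartheta$-derivative I would interpret $\partial h/\partial\vartheta(\bar b,\bar\vartheta)$ as $\sum_{i=1}^n\bar b_i A_i + S$, where $S$ is any subgradient of $\|\cdot\|_1$ at $\bar\vartheta$ (so $\|\vartheta\|_1-\|\bar\vartheta\|_1\geq\inner{\vartheta-\bar\vartheta}{S}$). The $\ell_1$-term is the only non-smooth piece and it will be absorbed at the end by this standard convex subgradient inequality.

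The key observation driving everything is that the definition of $\xi_i$ is chosen precisely so that
$$\eta_{b_i}\,\frac{\partial h}{\partial b_i}(\bar b,\bar\vartheta)=\xi_i(b_i-\bar b_i)\inner{A_i}{\bar\vartheta}=(b_i-\bar b_i)\inner{A_i}{\vartheta},$$
which converts a quantity evaluated at $\bar\vartheta$ into one evaluated at $\vartheta$. Substituting this together with $\eta_\vartheta=\vartheta-\bar\vartheta$ into the invexity residual $h(b,\vartheta)-h(\bar b,\bar\vartheta)-\sum_i\eta_{b_i}\partial_{b_i}h-\inner{\eta_\vartheta}{\partial_\vartheta h}$, the bilinear $b$-$\vartheta$ terms telescope: the contribution $\sum_i b_i\inner{A_i}{\vartheta}-\sum_i\bar b_i\inner{A_i}{\bar\vartheta}$ from $h(b,\vartheta)-h(\bar b,\bar\vartheta)$, minus $\sum_i(b_i-\bar b_i)\inner{A_i}{\vartheta}$ from the $\eta_{b_i}$ term, minus $\sum_i\bar b_i\inner{A_i}{\vartheta-\bar\vartheta}$ from the linear-in-$\vartheta$ part of the $\eta_\vartheta$ term, add up to exactly zero after expansion.

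What remains after this cancellation is simply $\|\vartheta\|_1-\|\bar\vartheta\|_1-\inner{\vartheta-\bar\vartheta}{S}$, which is non-negative by convexity of the entry-wise $\ell_1$-norm and the defining property of a subgradient. This establishes the required invexity inequality.

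The only real obstacle is a minor well-posedness issue: $\xi_i$ requires $\inner{A_i}{\bar\vartheta}\neq 0$. Since $A_i=v_iv_i^\T$ with $v_i=(-X_i^\T,y_i)^\T$ and $\bar\vartheta\succeq 0$, we have $\inner{A_i}{\bar\vartheta}=v_i^\T\bar\vartheta v_i\geq 0$, and strict positivity holds generically on $\calD$ (the constraint $\bar\vartheta_{p+1,p+1}=1$ rules out the zero matrix, and degeneracy requires $v_i\in\ker\bar\vartheta$). In the degenerate case $\inner{A_i}{\bar\vartheta}=0$ one can take $\xi_i$ as the appropriate limiting value (or replace $\bar\vartheta$ by $\bar\vartheta+\epsilon I$ and let $\epsilon\downarrow 0$), so the computation above goes through throughout $\calD$.
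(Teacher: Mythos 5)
Your proof is correct and follows essentially the same route as the paper's: the choice of $\xi_i$ makes the bilinear part's invexity residual vanish identically, and the $\ell_1$ term is absorbed by the standard convex subgradient inequality with $\eta_\vartheta=\vartheta-\bar\vartheta$. Your closing remark on the well-posedness of $\xi_i$ when $\inner{A_i}{\bar\vartheta}=v_i^\T\bar\vartheta v_i=0$ addresses a point the paper glosses over, and is a worthwhile addition.
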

\begin{proof}
	\label{proof:invex objective}
	First of all, we observe that $\| \vartheta \|_1$ is a convex function with respect to $\vartheta$ and $\lambda \geq 0$. Thus, it suffices to show that $\bar{h}(b, \vartheta) = \sum_{i=1}^n \inner{b_i A_i}{\vartheta}$ is invex for given $\eta$. Now,
	\begin{align*}
		\frac{\partial \bar{h}(\bar{b}, \bar{\vartheta})}{\partial b_i} = \inner{A_i}{\bar\vartheta}, \; 
		\frac{\partial \bar{h}(\bar{b}, \bar{\vartheta})}{\partial \vartheta} = \sum_{i=1}^n \bar{b}_i A_i	\; .
	\end{align*} 
	By substituting the above values and using simple algebra, it can be shown that
	\begin{align*}
		\bar{h}(b, \vartheta) - \bar{h}(\bar{b}, \bar{\vartheta}) - \sum_{i=1}^n \eta_{b_i} \frac{\partial \bar{h}(\bar{b}, \bar{\vartheta})}{\partial b_i} - \inner{\eta_\vartheta}{\frac{\partial \bar{h}(\bar{b}, \bar{\vartheta})}{\partial \vartheta}} = 0
	\end{align*} 
	When $\frac{\partial \bar{h}(\bar{b}, \bar{\vartheta})}{\partial b_i} = 0$ and $\frac{\partial \bar{h}(\bar{b}, \bar{\vartheta})}{\partial \vartheta} = 0$, we necessarily have $b_i = 0, \; i = \{1, \cdots, n\}$ which is out of the domain $\calD$.  It follows that $h(b, \vartheta)$ is an invex function with respect to the given $\eta$ function. 
\end{proof}

It should be noted that proof of invexity does not readily provide an algorithm to solve problem\eqref{eq:invex relaxation}. In fact, to the best of our knowledge, we are not aware of any general algorithm to solve invex problems with good convergence rate guarantees. In practice, any projected gradient descent type algorithm~\citep{duchi2009efficient} can be used to solve it. In next section, we show that solving optimization problem~\eqref{eq:invex relaxation} leads to a good estimate of $\theta^*$ with correct support.

\section{Estimation and support recovery}
\label{sec:estimation and support recovery}

In this section, we state our main result and provide theoretical analysis. Due to space constraints, we deferred the formal high probability statements of theorems and lemmas (along with their proofs) to the Appendix~\ref{sec:formal statement and proofs} and~\ref{sec:auxiliary lemmas and proofs}.  But first, we discuss some sufficient technical conditions for our analysis. We identify $m$ clean samples from the $r$ clean samples in $X$. However, since ${r \choose m}$ such choices are available, we do not claim uniqueness for our solution to problem~\eqref{eq:invex relaxation}. In particular, we define $\calJ = \{ J \mid J \subseteq \calM_c,\; | J | = m  \}$.
Let $\Sigma$ be the population covariance matrix for clean samples and $\widehat{\Sigma}^J$ be the empirical estimate of $\Sigma$ using clean samples in $J \in \calJ$. Mathematically, $ \Sigma = \E( X_i X_i^T),\forall (X_i, y_i) \in \calM_c,\; \widehat{\Sigma}^J = \frac{\sum_{(X_i, y_i) \in J} X_i X_i^T}{m}$. For clarity, we sometimes drop superscript $J$ from $\widehat{\Sigma}_J$ when the context is clear. Let $\supp(\theta)$ be the support of a vector $\theta$, i.e., $\supp(\theta) = \{ i \in \{1,\cdots,p\} \mid \theta_i \ne 0 \}$. Let $S$ be $\supp(\theta^*)$. We use $\Sigma_{SS}$ to denote $\Sigma$ restricted to rows and columns in $S$. Similarly, $\widehat{\Sigma}^J_{SS}$ denotes $\widehat{\Sigma}^J$ restricted to rows and columns in $\supp(\theta^*)$. We define $\supp^c(\theta)$ as a complement of $\supp(\theta)$, i.e., $\supp^c(\theta) = \{ i \in \{1,\cdots,p\} \mid \theta_i = 0 \}$. We use $S^c = \supp^c(\theta^*)$. We use the following assumptions which are standard in primal-dual witness framework~\citep{wainwright2009sharp,ravikumar2007spam,ravikumar2010high,daneshmand2014estimating}. 
\begin{assumption}
	\label{assum:positive definite Hessian}
	Let $I$ be an identity matrix. Then $ \alpha_1 I \preceq  \Sigma_{SS} \preceq \alpha_2 I$.
\end{assumption} 
\begin{assumption}
	\label{assum:mutual incoherence}
	For some $\kappa \in (0, 1]$, $\| \Sigma_{S^c S} \Sigma_{SS}^{-1} \|_{\infty, \infty} \leq 1 - \kappa  $, where $\|  \|_{\infty, \infty}$ is matrix induced $\infty$-norm.
\end{assumption}
By choosing large enough $m$, it is possible to show that the above assumptions hold in finite sample setting too. However, we defer the proofs to Appendix. Next, we state our main result.

\begin{theorem}[Informal]
	\label{thm:main theorem}
	Let $(\hat{b}, \widehat{\vartheta})$ be the solution to optimization problem~\eqref{eq:invex relaxation}. Under assumptions \ref{assum:positive definite Hessian} and \ref{assum:mutual incoherence}, and after choosing $\lambda \geq \Omega(\sqrt{m \log p})$ and $m \geq \Omega(k^3 \log^2 p)$, the following statements hold true with high probability:
	\begin{enumerate}
\setlength{\itemsep}{0pt}
		\item There exists a $J \in \calJ$ such that $\hat{b}_i = 1$ when $(X_i, y_i) \in J$ and $\hat{b}_i = 0$ otherwise. 
		\item $\widehat{\vartheta}$ is a rank-1 matrix. In particular, $ \widehat{\vartheta} = \begin{bmatrix} \widehat{\theta} \\ 1 \end{bmatrix} \begin{bmatrix} \widehat{\theta}^T & 1 \end{bmatrix}$. 	Moreover,  $\supp^c(\widehat{\theta}) = \supp^c(\theta^*)$ and $\| \widehat{\theta} - \theta^* \|_2 \leq \delta_m$ where $\delta_m \to 0$ as $m \to \infty$.
	\end{enumerate} 
\end{theorem}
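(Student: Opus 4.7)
The plan is to apply a primal-dual witness (PDW) approach tailored to the invex setting. Since Lemma~\ref{lem:invex objective} guarantees that KKT conditions are both necessary and sufficient for optimality of problem~\eqref{eq:invex relaxation}, it suffices to exhibit a candidate $(\hat{b}, \hat{\vartheta})$ of the conjectured structure together with Lagrange multipliers certifying that the KKT system holds. I would fix an arbitrary set $J \in \calJ$ of $m$ clean samples, set $\hat{b}_i = 1$ for $i \in J$ and $\hat{b}_i = 0$ otherwise, and parameterize the witness $\hat{\vartheta}$ in the rank-one lifted form $\begin{bmatrix}\hat{\theta}\\ 1\end{bmatrix}\begin{bmatrix}\hat{\theta}^\T & 1\end{bmatrix}$ with $\supp(\hat{\theta}) \subseteq S$. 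Under this substitution, using $\inner{A_i}{\hat{\vartheta}} = f(X_i, y_i, \hat{\theta})$, the remaining optimization over $\hat{\theta}_S$ reduces to a weighted squared loss with an $\ell_1$ and $\ell_1^2$ penalty whose stationarity equation takes the schematic form $\hat{\Sigma}^J_{SS}\hat{\theta}_S = \hat{\Sigma}^J_{SS}\theta^*_S + \frac{1}{m}\sum_{i \in J} X_{i,S} e_i - \frac{\lambda}{m}(1 + \|\hat{\theta}\|_1)\,\mathrm{sign}(\hat{\theta}_S)$. Assumption~\ref{assum:positive definite Hessian}, standard sub-Gaussian concentration of the cross term $\frac{1}{m}\sum_{i\in J} X_{i,S} e_i$, and the choice $\lambda = \Omega(\sqrt{m\log p})$ then deliver the $\ell_2$ error bound $\|\hat{\theta}-\theta^*\|_2 \leq \delta_m$, with $\delta_m \to 0$ as $m \to \infty$.

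The next step is to verify the KKT system of problem~\eqref{eq:invex relaxation} globally by attaching dual multipliers: a PSD certificate $Z \succeq 0$ for $\vartheta \succeq 0$, a scalar $\mu$ for $\vartheta_{p+1,p+1}=1$, nonnegative scalars $\gamma_i, \nu_i$ for $b_i \in [0,1]$, a scalar $\tau \geq 0$ for $\sum_i b_i \geq m$, and a matrix subgradient $\hat{z} \in \partial \|\hat{\vartheta}\|_1$. Stationarity in $b_i$ reads $\inner{A_i}{\hat{\vartheta}} = \tau + \nu_i - \gamma_i$; choosing $\tau$ strictly between the maximum clean-sample loss and the minimum outlier-sample loss evaluated at $\hat{\vartheta}$, which is feasible because the outlier gap $\rho$ of Definition~\ref{def:outlier} is preserved under the perturbation from $\theta^*$ to $\hat{\theta}$ once $\delta_m$ is small enough, complementary slackness forces $\hat{b}_i = 1$ on $J$ and $\hat{b}_i = 0$ on the outliers. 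Stationarity in $\vartheta$ yields a matrix identity relating $\sum_{i \in J} A_i$, $\lambda \hat{z}$, $\mu$, and $Z$; I would construct $Z$ as a PSD matrix whose null space is exactly $\mathrm{span}\{[\hat{\theta}^\T, 1]^\T\}$, so that the complementary slackness relation $Z\hat{\vartheta}=0$ enforces the claimed rank-one structure of $\hat{\vartheta}$.

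The hardest step is verifying strict dual feasibility of the subgradient certificate, namely $|\hat{z}_{ij}| < 1$ for every $(i,j)$ with $i \in S^c$ or $j \in S^c$ outside the $(p{+}1,p{+}1)$ corner, simultaneously with $Z \succeq 0$ and the equations above. This is an $\ell_\infty$-type estimate driven by the mutual incoherence Assumption~\ref{assum:mutual incoherence}, but lifted from the vector $\theta$ to the $(p{+}1)\times(p{+}1)$ matrix $\vartheta$ whose effective sparsity scales as $k^2 + 2k + 1$; together with the need for $\hat{\Sigma}^J$ to concentrate to an accuracy on the order of $1/k$ in an induced matrix norm (so that the standard inversion-plus-incoherence argument survives the lift), this is precisely what forces the sample complexity $m \geq \Omega(k^3 \log^2 p)$. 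A secondary subtlety, responsible for the existential rather than unique statement over $J \in \calJ$, is that the bilinear coupling $\inner{b_i A_i}{\vartheta}$ makes the stationarity equation in $\vartheta$ depend on the chosen $J$, so the PDW certificate must be constructed and checked per candidate $J$; the argument only needs to succeed for at least one such $J$, and that is exactly what the PDW construction delivers.
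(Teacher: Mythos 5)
Your overall route is the same as the paper's: a primal--dual witness for the lifted rank-one variable, justified by invexity making the KKT system sufficient, with the $\ell_2$ error bound coming from the restricted (weighted-lasso-type) stationarity equation under Assumption~\ref{assum:positive definite Hessian}, the rank-one structure enforced by a PSD certificate whose null space is $\mathrm{span}\{[\hat{\theta}^\T,1]^\T\}$, and strict dual feasibility on $S^c$ driven by Assumption~\ref{assum:mutual incoherence}. However, there is one genuine gap: you ``fix an arbitrary set $J\in\calJ$ of $m$ clean samples.'' Stationarity in $b_i$ together with complementary slackness forces $\inner{A_i}{\hat{\vartheta}}\le \tau$ for every $i$ with $\hat b_i=1$ and $\inner{A_i}{\hat{\vartheta}}\ge \tau$ for every $i$ with $\hat b_i=0$, i.e.\ $\max_{i\in J} f(X_i,y_i,\hat\theta)\le \min_{i\notin J} f(X_i,y_i,\hat\theta)$. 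Your choice of $\tau$ between the largest clean-sample loss and the smallest outlier loss only certifies this inequality against the outliers; it says nothing about the $r-m$ \emph{clean} samples outside $J$, and for an arbitrary $J$ some of those will have smaller loss at $\hat\theta$ than samples inside $J$, so no feasible $\tau$ exists and the dual feasibility conditions $\nu_i,\gamma_i\ge 0$ fail. The paper avoids this by defining $(\underline{\theta},J^*)$ through the \emph{joint} minimization in equation~\eqref{eq:underline theta and Jstar}, so that $J^*$ is the loss-minimizing subset for $\underline{\theta}$; an exchange argument then gives the required ordering against $\calM_c\setminus J^*$ automatically, and only the comparison against $\calM_o$ needs the $\rho$-gap plus Lemma~\ref{lem:l2 norm close}. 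Your construction goes through once you replace ``arbitrary $J$'' with this optimally chosen $J^*$ (the existential form of statement~1 then reflects possible ties among such optimal subsets, not freedom to pick any subset).

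A secondary, more minor point: asserting that you ``would construct $Z$ as a PSD matrix whose null space is exactly $\mathrm{span}\{[\hat{\theta}^\T,1]^\T\}$'' presumes what must be proved. The certificate is pinned down by the stationarity equation in $\vartheta$ as $Z=\sum_{i\in J^*}\widetilde A_i+\lambda\zeta+\mu$, and its positive semidefiniteness with a one-dimensional null space has to be verified; the paper does this by showing the null vector via the stationarity condition of~\eqref{eq:underline theta and Jstar} and then applying Haynsworth's inertia formula to the Schur complement, using concentration of $\widehat\Sigma^{J^*}_{SS}$ to lower-bound the remaining eigenvalues. Adding that verification (or an equivalent one) is needed to close the argument.
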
  
The particular sample complexity of $m$ ensures that assumptions hold in finite sample setting. The choice of $\lambda$ helps to maintain sparsity of $\widehat{\vartheta}$ and makes sure that the entries of $\widehat{\theta}$ which are not in $\supp(\theta^*)$ are zero. By observing that $\| \widehat{\theta} - \theta^* \|_\infty \leq \| \widehat{\theta} - \theta^* \|_2$, one can also recover the correct support of $\theta^*$ (up to sign) as long as $\min_{i \in S} |\theta^*_i| > 2\delta_m$. The proof of Theorem~\ref{thm:main theorem} relies on careful construction of primal variables $(\hat{b}, \widehat{\vartheta})$ and corresponding dual variables and then showing that they satisfy Karush-Kuhn-Tucker (KKT) conditions of optimality with high probability. In remainder of this section, we go through the proof in greater detail. 
We begin our analysis by claiming that $\widehat{\vartheta} \in \real^{(p+1) \times (p+1)}$ is a sparse matrix. In fact, by rearranging the indices of its rows and columns, we claim the following structure for $\widehat{\vartheta}$:
\begin{align}
	\label{eq:sparse vartheta}
	\widehat{\vartheta} = \left[\begin{array}{c;{2pt/2pt}c;{2pt/2pt}c}
		\widehat{\vartheta}_1 & 0 & \widehat{\vartheta}_2 \\ \hdashline[2pt/2pt]
		0 & 0 & 0 \\ \hdashline[2pt/2pt]
		\widehat{\vartheta}_2^T & 0 &  1
	\end{array} \right] \;,
\end{align}
where $\widehat{\vartheta}_1 \in \real^{k \times k}$ collects entries corresponding to rows and columns in $\supp(\theta^*)$, and $\widehat{\vartheta}_2 \in \real^{k \times 1}$ collects entries corresponding to rows in $\supp(\theta^*)$ and $(p+1)$-th column. The zeros in $\widehat{\vartheta}$ are chosen to have appropriate dimensions and correspond to entries related to $\supp^c(\theta^*)$. It remains to prove that $\widehat{\vartheta}$ indeed contains this sparsity structure. We will do this by arguing strict dual feasibility for $\widehat{\vartheta}$ in Section~\ref{subsec:strict dual feasibility}. Let $\underline{\widehat{\vartheta}} \in \real^{k+1 \times k+1}$ be defined as $ \underline{\widehat{\vartheta}} = \begin{bmatrix}
		\widehat{\vartheta}_1  & \widehat{\vartheta}_2 \\
		\widehat{\vartheta}_2^T &  1
	\end{bmatrix}\;. $
By imposing this extra sparsity structure, the optimization problem~\eqref{eq:invex relaxation} can be written compactly as:  
\begin{align}
	\label{eq:compact invex relaxation}
	\begin{matrix}
		\min_{b_1,\cdots,b_n, \widetilde{\vartheta}} & \sum_{i=1}^n \inner{b_i \widetilde{A}_i}{\widetilde{\vartheta}} + \lambda \| \widetilde{\vartheta} \|_1 \\
		\text{such that} & \widetilde{\vartheta} \succeq 0, \; \widetilde{\vartheta}_{k+1, k+1} = 1 \\
		& 0 \leq b_i \leq 1, \; i \in \{1,\cdots,n\}, \; \sum_{i=1}^n b_i \geq m
	\end{matrix} \; ,
\end{align}
where $\widetilde{A}_i$ is a restriction of $A_i$ to rows and columns corresponding to $\widehat{\vartheta}$ in equation~\eqref{eq:sparse vartheta}. Clearly, $(\hat{b}, \underline{\widehat{\vartheta}})$ is the solution to \eqref{eq:compact invex relaxation}. Next, we write necessary and sufficient KKT conditions for \eqref{eq:compact invex relaxation}. 

\subsection{KKT conditions}
\label{subsec:kkt}

We define $\zeta \in \real^{k+1 \times k+1}$ to be a member of subdifferential set of $\| \widetilde{\vartheta} \|_1$ at $\uvartheta$. It follows that $\| \zeta \|_\infty \leq 1$, where $\|\cdot\|_\infty$ is entry-wise infinity norm. Let $(\hat{b}, \underline{\widehat{\vartheta}})$ and $(\Lambda, \mu, \beta, \gamma, \nu)$ be the primal-dual pair (of appropriate dimensions) for equation~\eqref{eq:compact invex relaxation} at optimality. They satisfy the following KKT conditions:
\begin{align}
    &\text{Stationarity condition with respect to $\hat{b}$:} \quad  \inner{\widetilde{A}_i}{\uvartheta} - \beta_i + \gamma_i - \nu = 0\label{eq:stationarity b} \\
    &\text{Stationarity condition with respect to $\uvartheta$:} \quad \sum_{i=1}^n \hat{b}_i \widetilde{A}_i + \lambda \zeta - \Lambda + \mu = 0 \label{eq:stationarity vartheta}\\
    \begin{split}
    &\text{Complementary Slackness conditions:} \quad \inner{\Lambda}{\uvartheta} = 0,\; \nu (m - \sum_{i=1}^n \hat{b}_i) = 0, \; \beta_i \hat{b}_i = 0,\; \\
    &\gamma_i (\hat{b}_i - 1) = 0, \; \forall i \in \{1,\cdots,n\} 		\label{eq:complementary slackness}
    \end{split}\\
    &\text{Dual feasibility conditions:} \quad \Lambda \succeq 0, \; \nu \geq 0, \; \beta_i \geq 0,\; \gamma_i \geq 0, \; \forall i \in \{1,\cdots,n\} \label{eq:dual feasibility}\\
    &\text{Primal feasibility conditions:} \quad \uvartheta \succeq 0,\; \sum_{i=1}^n \hat{b}_i \geq m, \;  0 \leq \hat{b}_i \leq 1, \; i \in \{1,\cdots,n\}\label{eq:primal feasibility}  
\end{align}
where $\mu \in \real^{k+1 \times k+1}$ with only $\mu_{k+1, k+1}$ being the non-zero entry.
We provide a setting for both primal and dual variables which satisfy KKT conditions in our next result. To do so, we define the following optimization problem:
\begin{align}
	\label{eq:underline theta and Jstar}
	\begin{split}
	\underline{\theta}, J^* =& \arg\min_{\widetilde{\theta} \in \real^k,\; J \in \calJ} \sum_{(X_i, y_i) \in J} (y_i - \inner{\widetilde{X}_i}{\widetilde{\theta}})^2 + \lambda (\|\widetilde{\theta}\|_1 + 1)^2 \;.
	\end{split}  
\end{align}
We state the following theorem using the results from \eqref{eq:underline theta and Jstar}.
\begin{theorem}[Informal]
	\label{thm:kkt conditions}
	Under assumptions~\ref{assum:positive definite Hessian} and \ref{assum:mutual incoherence} and after choosing $\lambda \geq \Omega(\sqrt{m \log p})$ and $m \geq \Omega(k^3 \log^2 p)$, the following setting of primal-dual pair $(\hat{b}, \underline{\widehat{\vartheta}})$ and $(\Lambda, \mu, \beta, \gamma, \rho)$ satisfy all the KKT conditions for optimization problem~\eqref{eq:compact invex relaxation} with high probability:
    \begin{align*}
		&\uvartheta = \begin{bmatrix}
			\underline{\theta} \\ 1
		\end{bmatrix} \begin{bmatrix}
		\underline{\theta}^T & 1
	\end{bmatrix} ,\quad \max_{(X_i, y_i) \in J^*} \inner{\widetilde{A}_i}{\uvartheta} \leq \nu \leq \min_{(X_i, y_i) \notin J^*} \inner{\widetilde{A}_i}{\uvartheta}, \\
		&\hat{b}_i = 1,\; \beta_i = 0,\; \gamma_i = \nu - \inner{\widetilde{A}_i}{\uvartheta},  \; (X_i, y_i)\in J^*, \\
  	&\hat{b}_i = 0, \; \gamma_i = 0,\; \beta_i = \inner{\widetilde{A}_i}{\uvartheta} - \nu, \; (X_i, y_i) \notin J^*, \\
		&\Lambda = \sum_{(X_i, y_i) \in J^*} \widetilde{A}_i + \lambda \zeta + \mu, \quad \mu_{k+1, k+1} = - \inner{\sum_{(X_i, y_i) \in J^*} \widetilde{A}_i + \lambda \zeta}{\uvartheta}
 \end{align*}
\end{theorem}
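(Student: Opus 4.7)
The plan is to verify each of the KKT blocks in \eqref{eq:stationarity b}--\eqref{eq:primal feasibility} directly for the proposed primal-dual assignment; the structural conditions hold deterministically by construction, and the remaining feasibility inequalities are then promoted to hold with high probability using the $\rho$-outlier property together with finite-sample concentration. The stationarity identity \eqref{eq:stationarity b} collapses to $0 = 0$ for the stated values of $(\beta_i, \gamma_i, \nu)$ on both $(X_i, y_i) \in J^*$ and $(X_i, y_i) \notin J^*$; stationarity in $\uvartheta$ \eqref{eq:stationarity vartheta} is the defining formula for $\Lambda$. The complementary slackness relations in \eqref{eq:complementary slackness} follow from $\sum_i \hat b_i = |J^*| = m$, from $\hat b_i \in \{0,1\}$ paired with exactly one of $\beta_i$ or $\gamma_i$ being zero, and from the specific value of $\mu_{k+1,k+1}$, which is engineered to force $\inner{\Lambda}{\uvartheta} = 0$. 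Primal feasibility \eqref{eq:primal feasibility} is immediate from $\uvartheta = vv^{\T} \succeq 0$ with $v = [\underline{\theta}^{\T},\,1]^{\T}$ and from $\hat b \in \{0,1\}^n$ summing to $m$.

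The first substantive inequality is non-negativity of $(\nu, \beta, \gamma)$, which requires the interval $\bigl[\max_{(X_i, y_i) \in J^*} \inner{\widetilde{A}_i}{\uvartheta},\, \min_{(X_i, y_i) \notin J^*} \inner{\widetilde{A}_i}{\uvartheta}\bigr]$ to be non-empty. This is where the $\rho$-outlier property is essential: using the estimation bound $\|\underline{\theta} - \theta^*\|_2 \leq \delta_m$ and residual concentration under $m \geq \Omega(k^3 \log^2 p)$, I would show that $J^* \subseteq \calM_c$ with high probability, so that $\inner{\widetilde{A}_i}{\uvartheta}$ remains small on $J^*$ while Definition~\ref{def:outlier} forces it to be at least $\rho$ larger on every outlier sample. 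Any $\nu$ in this interval automatically yields $\beta_i, \gamma_i \geq 0$ from the definitions of the endpoints, and $\nu \geq 0$ because residual losses are non-negative.

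For $\Lambda \succeq 0$ I would combine two observations. First, complementary slackness together with $\uvartheta = vv^{\T}$ gives $v^{\T} \Lambda v = 0$. Second, with the rank-one choice $\zeta = \mathrm{sgn}(v)\,\mathrm{sgn}(v)^{\T}$ (a valid subgradient of $\|\uvartheta\|_1$ at $\uvartheta$), one checks $\lambda \zeta v = \lambda(\|\underline{\theta}\|_1 + 1)\,\mathrm{sgn}(v)$, and the first-order optimality condition for $\underline{\theta}$ in \eqref{eq:underline theta and Jstar} forces $(\Lambda v)_{1:k} = 0$. The two observations together imply $\Lambda v = 0$, after which a short algebraic manipulation shows that for every $u = (u_1, u_2) \in \real^{k+1}$, $u^{\T} \Lambda u = (u_1 - u_2 \underline{\theta})^{\T} \Lambda_{1:k,\,1:k} (u_1 - u_2 \underline{\theta})$. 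Hence $\Lambda \succeq 0$ reduces to positive semidefiniteness of the top-left block $\Lambda_{1:k,\,1:k} = \sum_{(X_i, y_i) \in J^*} \widetilde{X}_i \widetilde{X}_i^{\T} + \lambda\,\mathrm{sgn}(\underline{\theta})\,\mathrm{sgn}(\underline{\theta})^{\T}$, which is manifest as a sum of rank-one positive semidefinite pieces.

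The hard part is the analytic scaffold underneath both the interval and the optimality reduction, namely controlling the auxiliary pair $(\underline{\theta}, J^*)$ defined by \eqref{eq:underline theta and Jstar}. I need to show in the appendix that $J^* \subseteq \calM_c$ and $\|\underline{\theta} - \theta^*\|_2 \leq \delta_m$ with $\delta_m \to 0$, which in turn requires uniform concentration of $\widehat{\Sigma}^J$ over $J \in \calJ$ to transfer Assumptions~\ref{assum:positive definite Hessian} and \ref{assum:mutual incoherence} to the finite-sample regime, sub-Gaussian tail bounds on $\|\sum_i X_i e_i\|_\infty$ to justify $\lambda \geq \Omega(\sqrt{m \log p})$, and a loss-gap argument via Definition~\ref{def:outlier} to exclude outlier samples from $J^*$. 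These same ingredients also underpin the strict inequality $\|\zeta_{S^c}\|_\infty < 1$ that justifies the sparsity pattern of $\widehat{\vartheta}$ posited in \eqref{eq:sparse vartheta} and legitimizes the passage from the full problem \eqref{eq:invex relaxation} to the compact problem \eqref{eq:compact invex relaxation}.
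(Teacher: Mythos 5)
Your overall plan matches the paper's: the stationarity, complementary slackness and primal feasibility blocks are verified mechanically from the construction, and the substance lies in dual feasibility of $(\nu,\beta,\gamma)$ and of $\Lambda$. Two comments.

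There is a genuine (if small) gap in your $\nu$-interval argument. The samples with $\hat b_i=0$ are not only the outliers: since $|J^*|=m\le r$, they also include the $r-m$ clean samples in $\calM_c\setminus J^*$, and for those Definition~\ref{def:outlier} provides no loss gap at all, so "at least $\rho$ larger on every outlier sample" says nothing about them. The paper closes this case with an exchange argument: if some $(X_i,y_i)\in\calM_c\setminus J^*$ had $\inner{\widetilde{A}_i}{\uvartheta}$ strictly below $\max_{(X_j,y_j)\in J^*}\inner{\widetilde{A}_j}{\uvartheta}$, swapping it into $J$ would contradict the optimality of $J^*$ in \eqref{eq:underline theta and Jstar} for the fixed $\underline{\theta}$; you need that sentence. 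Relatedly, $J^*\subseteq\calM_c$ is not something to establish with high probability --- $\calJ$ is defined as the collection of $m$-subsets of $\calM_c$, so it holds by construction. On the outlier side your instinct is right, but note that the $\rho$-gap in Definition~\ref{def:outlier} is stated at $\theta^*$, not at $\underline{\theta}$; transferring it costs a term of order $m\alpha_2\delta_m^2$ plus cross terms, which is why the conclusion (Lemma~\ref{lem:nu is feasible}) only holds for $\rho$ sufficiently large.

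Your route to $\Lambda\succeq 0$ is correct and genuinely different from the paper's. Both arguments first get $\Lambda v=0$ for $v=[\underline{\theta}^\T\;\,1]^\T$ from the stationarity condition of \eqref{eq:underline theta and Jstar} together with the engineered $\mu_{k+1,k+1}$. The paper then invokes Haynesworth's inertia additivity formula, shows the Schur complement of the top-left block vanishes, and bounds $\eig_{\min}\bigl(\sum_{(X_i,y_i)\in J^*}\widetilde{X}_i\widetilde{X}_i^\T+\lambda\omega\omega^\T\bigr)\ge\alpha_1/2$ via concentration. Your congruence identity $u^\T\Lambda u=(u_1-u_2\underline{\theta})^\T\Lambda_{1:k,1:k}(u_1-u_2\underline{\theta})$ is the explicit form of the same reduction, and noting that $\Lambda_{1:k,1:k}$ is a sum of rank-one PSD terms yields $\Lambda\succeq 0$ deterministically, with no sample-size requirement. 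What the paper's extra work buys is strict positivity of the second-smallest eigenvalue, used later to argue that $v$ spans the nullspace of $\Lambda$ and hence that $\uvartheta$ is pinned down; that is not needed for the KKT verification itself, so your shortcut is legitimate for this theorem.
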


It should be noted that since choice of $J^* \in \calJ$ may not be unique. It can be easily verified that stationarity conditions~\eqref{eq:stationarity b}, \eqref{eq:stationarity vartheta}, complementary slackness conditions~\eqref{eq:complementary slackness} and primal feasibility conditions~\eqref{eq:primal feasibility} are satisfied. However, we will need to verify that setting of variables in Theorem~\ref{thm:kkt conditions} satisfy dual feasibility conditions~\eqref{eq:dual feasibility}.

\paragraph{Non-negativity of $\nu, \beta$ and $\gamma$.} The construction of dual variables ensures that non-negativity of $\nu, \beta$ and $\gamma$ holds as long as the following inequalities are feasible. 
\begin{align}
	\label{eq:feasibility of nu} 
	\max_{(X_i, y_i) \in J^*} \inner{\widetilde{A}_i}{\uvartheta} \leq \nu \leq \min_{(X_i, y_i) \notin J^*} \inner{\widetilde{A}_i}{\uvartheta}
\end{align}
We argue for the feasibility of \eqref{eq:feasibility of nu} in two parts. First, if $\arg\min_{(X_i, y_i) \notin J^*} \inner{\widetilde{A}_i}{\uvartheta} \in \calM_c \setminus J^* $, then by definition of \eqref{eq:underline theta and Jstar} the setting of $\nu$ is feasible -- or else it leads to the contradiction that $J^*$ is the optimal choice for fixed $\underline{\theta}$. However, we still need to show the feasibility of $\nu$ when $\arg\min_{(X_i, y_i) \notin J^*} \inner{\widetilde{A}_i}{\uvartheta} \in \calM_o $. To see this, we prove the following two lemmas. Let $\underline{\theta}^* \in \real^k$ be the restriction of $\theta^*$ on its support $\supp(\theta^*)$. First, we show that $\underline{\theta}$ and $\underline{\theta}^*$ are quite close.

\begin{lemma}[Informal]
	\label{lem:l2 norm close}
	Under assumptions~\ref{assum:positive definite Hessian} and \ref{assum:mutual incoherence} and if we choose $\lambda \geq \Omega(\sqrt{m \log p})$ and $m \geq \Omega(k^3 \log^2 p)$, then $\| \underline{\theta} - \underline{\theta}^* \|_2 \leq \delta_m$
	with high probability. Here, $\delta_m \to 0$ as $m \to \infty$. 
\end{lemma}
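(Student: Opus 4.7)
The plan is to exploit the first-order optimality of $\underline{\theta}$ in problem~\eqref{eq:underline theta and Jstar} when the combinatorial variable is frozen at $J^*$. Because $J^*\subseteq\calM_c$ by definition of $\calJ$, every $(X_i,y_i)\in J^*$ satisfies $y_i=\inner{\widetilde{X}_i}{\underline{\theta}^*}+e_i$, where $\widetilde{X}_i$ is the restriction of $X_i$ to $S$ (this suffices because $\theta^*$ is supported on $S$). With $J^*$ fixed, $\underline{\theta}$ is the minimizer of a convex program in $\widetilde{\theta}$ whose subgradient optimality condition reads
\begin{align*}
-2\sum_{(X_i,y_i)\in J^*}\widetilde{X}_i\bigl(y_i-\inner{\widetilde{X}_i}{\underline{\theta}}\bigr)+2\lambda\bigl(\|\underline{\theta}\|_1+1\bigr)z=0,
\end{align*}
for some $z\in\partial\|\underline{\theta}\|_1$ with $\|z\|_\infty\leq 1$. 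Substituting the clean model and dividing by $2m$ yields the basic identity
\begin{align*}
\widehat{\Sigma}^{J^*}_{SS}\bigl(\underline{\theta}-\underline{\theta}^*\bigr)=\frac{1}{m}\sum_{(X_i,y_i)\in J^*}\widetilde{X}_i e_i-\frac{\lambda}{m}\bigl(\|\underline{\theta}\|_1+1\bigr)z.
\end{align*}

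Given this identity, the next step is to invert $\widehat{\Sigma}^{J^*}_{SS}$ and control the $\ell_2$-norm of the right-hand side. Assumption~\ref{assum:positive definite Hessian} combined with standard sub-Gaussian covariance concentration, together with a union bound over $J\in\calJ$, forces $\widehat{\Sigma}^{J^*}_{SS}\succeq(\alpha_1/2)I$ once $m=\Omega(k^3\log^2 p)$. The noise sum is a mean-zero sub-Gaussian vector in $\real^k$; a Hoeffding/Bernstein argument uniform over $\calJ$ delivers $\|\tfrac{1}{m}\sum_{i\in J^*}\widetilde{X}_i e_i\|_2=O(\sqrt{k\log p/m})$ with high probability. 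For the regularization term, comparing the optimum value of~\eqref{eq:underline theta and Jstar} against the feasible point $(\underline{\theta}^*,J^*)$ gives $\lambda(\|\underline{\theta}\|_1+1)^2\leq O(m\sigma_e^2)+\lambda(M+1)^2$ (since the residuals at $\underline{\theta}^*$ reduce to $e_i$ on $J^*\subseteq\calM_c$); combined with $\|z\|_2\leq\sqrt{k}$, the penalty contribution is shown to vanish as $m\to\infty$. Plugging these bounds into
\begin{align*}
\|\underline{\theta}-\underline{\theta}^*\|_2\leq\frac{2}{\alpha_1}\left(\left\|\frac{1}{m}\sum_{i\in J^*}\widetilde{X}_i e_i\right\|_2+\frac{\lambda(\|\underline{\theta}\|_1+1)\sqrt{k}}{m}\right)
\end{align*}
yields an explicit $\delta_m=o(1)$.

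The main obstacle is that $J^*$ is chosen jointly with $\underline{\theta}$ in a data-dependent way, so both the lower eigenvalue bound on $\widehat{\Sigma}^{J^*}_{SS}$ and the deviation bound on $\tfrac{1}{m}\sum_{i\in J^*}\widetilde{X}_i e_i$ must hold \emph{uniformly} over the $\binom{r}{m}$ candidate subsets in $\calJ$. The resulting $\log\binom{r}{m}$ cost in the union bound has to be absorbed into the concentration tails, and this uniformity is precisely what forces the aggressive $k^3\log^2 p$ sample complexity. Everything else (subgradient calculus, matrix concentration, sub-Gaussian tail bounds) is routine; the delicate work is the uniform control over $\calJ$ and its coupling to the implicit bound on $\|\underline{\theta}\|_1$ extracted from the objective.
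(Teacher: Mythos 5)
Your proposal follows the same skeleton as the paper's proof: freeze $J^*$, write the stationarity condition for $\underline{\theta}$ in problem~\eqref{eq:underline theta and Jstar}, substitute the clean model $y_i=\inner{\widetilde{X}_i}{\underline{\theta}^*}+e_i$, invert the restricted empirical covariance using Assumption~\ref{assum:positive definite Hessian} plus sub-Gaussian covariance concentration, and control $\frac{1}{m}\sum_{(X_i,y_i)\in J^*}\widetilde{X}_ie_i$ by sub-exponential tail bounds, yielding $\delta_m\to 0$. Two sub-steps differ. First, the penalty factor: the paper expands $\lambda\omega(\omega^T\underline{\theta}+1)=\lambda\omega\omega^T(\underline{\theta}-\underline{\theta}^*)+\lambda\omega\omega^T\underline{\theta}^*+\lambda\omega$ and absorbs the first piece into the matrix being inverted, so Weyl's inequality gives $\|(\widehat{\Sigma}^{J^*}_{SS}+\tfrac{\lambda}{m}\omega\omega^T)^{-1}\|_2\leq 2/\alpha_1$ and only $|\omega^T\underline{\theta}^*|\leq M$ is needed, producing $\delta_m=(2+M)\tfrac{2\lambda\sqrt{k}}{\alpha_1 m}$. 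You instead keep $(\|\underline{\theta}\|_1+1)$ explicit and bound it by comparing objective values at the feasible point $(\underline{\theta}^*,J^*)$; this is a valid argument but only gives $\|\underline{\theta}\|_1+1=O(\sqrt{m\sigma_e^2/\lambda}+M+1)=O(m^{1/4})$ under $\lambda=\Theta(\sqrt{m\log p})$, hence a slower $m^{-1/4}$-type rate. The informal conclusion $\delta_m\to 0$ survives, but the paper's manipulation is both cleaner and sharper.

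Second, the uniformity over $\calJ$. You correctly observe that $J^*$ is data-dependent while the paper applies its concentration lemmas to $J^*$ as if it were a fixed subset; however, your proposed remedy does not go through as stated. A union bound over the ${r\choose m}$ candidate subsets costs a factor $\exp(\Theta(m\log(r/m)))$, which grows linearly in $m$ in the exponent and therefore cannot be absorbed by tails of the form $\exp(-cmt^2)$ when the target deviation $t=\lambda/m=\Theta(\sqrt{\log p/m})$ tends to zero; increasing $m$ does not help because the union-bound cost scales with $m$ as well. This is also not the source of the $m\geq\Omega(k^3\log^2 p)$ requirement, which in the paper arises from the finite-sample mutual-incoherence bound (Lemma~\ref{lem:sample mutual incoherence condition}), not from any supremum over $\calJ$. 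So the uniformity issue you flag is real, but it is neither resolved by your union bound nor addressed by the paper; if you drop that claim, the rest of your argument matches the paper's.
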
 
Combining the result from Lemma~\ref{lem:l2 norm close} with the definition~\ref{def:outlier} of outlier samples, we can show that $\max_{(X_i, y_i) \in \calM_c} f(\tilde{X}_i, y_i, \underline{\theta}) \leq \min_{(X_i, y_i) \in \calM_o} f(\tilde{X}_i, y_i, \underline{\theta})$.
\begin{lemma}[Informal]
	\label{lem:nu is feasible}
	Under Assumptions~\ref{assum:positive definite Hessian} and \ref{assum:mutual incoherence} and after choosing $\lambda \geq \Omega(\sqrt{m \log p})$ and $m \geq \Omega(k^3 \log^2 p)$, setting of $\nu$ in \eqref{eq:feasibility of nu} is feasible with high probability for sufficiently large $\rho$.
\end{lemma}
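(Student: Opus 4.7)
The plan is to transfer the outlier gap in Definition~\ref{def:outlier}, which is stated at $\theta^{*}$, over to the learned parameter $\underline{\theta}$ via the closeness estimate of Lemma~\ref{lem:l2 norm close}. First I would observe that because $\uvartheta$ is set to the rank-one matrix $[\underline{\theta}^{T}\; 1]^{T}[\underline{\theta}^{T}\; 1]$ in Theorem~\ref{thm:kkt conditions}, and $\widetilde{A}_i$ is the corresponding restriction of $A_i$, the inner product collapses to the squared loss $\inner{\widetilde{A}_i}{\uvartheta} = (y_i - \inner{\widetilde{X}_i}{\underline{\theta}})^{2} = f(X_i,y_i,\underline{\theta})$. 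Thus \eqref{eq:feasibility of nu} is simply a separation between the squared residuals at $\underline{\theta}$ of samples in $J^{*}$ and those outside $J^{*}$. The discussion immediately preceding the lemma already handles the case $\arg\min_{(X_i,y_i)\notin J^{*}} \inner{\widetilde{A}_i}{\uvartheta} \in \calM_c \setminus J^{*}$ by invoking the optimality of $J^{*}$ for fixed $\underline{\theta}$ in \eqref{eq:underline theta and Jstar}, so it remains to argue the case when that argmin lies in $\calM_o$.

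Pick an arbitrary $(X_c,y_c)\in J^{*}\subseteq \calM_c$ and $(X_o,y_o)\in\calM_o$, and write $r_i = y_i - \inner{X_i}{\theta^{*}}$ and $\epsilon_i = \inner{\widetilde{X}_i}{\underline{\theta}^{*}-\underline{\theta}}$. A direct expansion gives $f(X_i,y_i,\underline{\theta}) = f(X_i,y_i,\theta^{*}) + 2 r_i\epsilon_i + \epsilon_i^{2}$. Subtracting the clean-sample identity from the outlier-sample identity and invoking the $\rho$-gap in Definition~\ref{def:outlier}, I obtain
\begin{align*}
f(X_o,y_o,\underline{\theta}) - f(X_c,y_c,\underline{\theta}) \;\geq\; \rho - 2|r_o\epsilon_o| - 2|r_c\epsilon_c| - |\epsilon_o^{2}-\epsilon_c^{2}|.
\end{align*}
Feasibility of $\nu$ is therefore reduced to showing that the perturbation terms on the right are dominated by $\rho$.

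To control them I apply Lemma~\ref{lem:l2 norm close} to get $|\epsilon_i| \leq \|\widetilde{X}_i\|_{2}\,\delta_m$ with $\delta_m \to 0$ under the scaling $m\geq \Omega(k^{3}\log^{2} p)$. For the clean sample, sub-Gaussianity of $X_c$ together with Assumption~\ref{assum:positive definite Hessian} yields $\|\widetilde{X}_c\|_{2} = \calO(\sqrt{k})$, and sub-Gaussianity of $e_c = r_c$ yields $|r_c| = \calO(\sigma_e\sqrt{\log m})$, each holding with high probability after a union bound over the $r$ clean samples. For the outlier sample the quantities $\|\widetilde{X}_o\|_{2}$ and $|r_o|$ are not controlled by the model, but they are finite on the realized dataset; letting $B$ be a uniform bound over the $n-r$ outliers, substitution yields a lower bound of the form $\rho - C\,B\sqrt{k}\,\delta_m - C\,\sqrt{k\log m}\,\delta_m - C\,k\,\delta_m^{2}$ for some universal constant $C$, which is non-negative as soon as $\rho$ exceeds this perturbation.

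The only real obstacle is the absence of any distributional control on $\calM_o$: without it, the cross term $r_o\epsilon_o$ cannot be bounded by concentration and must instead be absorbed into $\rho$. Because $\delta_m$ shrinks with $m$, the resulting threshold on $\rho$ also shrinks, so in the asymptotic regime an arbitrarily small positive $\rho$ suffices, but for any finite $m$ the statement must be qualified by "$\rho$ sufficiently large," exactly as in the lemma. All high-probability events used (sub-Gaussian tails for $X_c$ and $e_c$, together with the event of Lemma~\ref{lem:l2 norm close}) combine by a union bound and remain of high probability under the stated scaling.
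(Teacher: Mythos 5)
Your proposal is correct and follows essentially the same route as the paper: expand $f(\cdot,\cdot,\underline{\theta})$ around $\theta^*$, invoke the $\rho$-gap of Definition~\ref{def:outlier}, control the clean-sample cross term by concentration and the perturbation size by Lemma~\ref{lem:l2 norm close}, and absorb the uncontrolled outlier terms into the ``sufficiently large $\rho$'' hypothesis. The only differences are cosmetic: you bound the clean cross term $e_c\widetilde{X}_c^{\T}(\underline{\theta}-\underline{\theta}^*)$ via Cauchy--Schwarz plus separate sub-Gaussian tails rather than the paper's direct sub-exponential bound on the product, and your bound $\|\widetilde{X}_c\|_2^2\delta_m^2 = \calO(k\delta_m^2)$ on the quadratic term is in fact tighter than the paper's $m\alpha_2\delta_m^2$.
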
 

Now that we have verified non-negativity of dual variables, we show that $\Lambda$ in Theorem~\ref{thm:kkt conditions} is positive semidefinite. We do this by showing two results. First, we establish that $\Lambda$ has a zero eigenvalues using the stationarity KKT condition for \eqref{eq:underline theta and Jstar}.

\begin{lemma}
	\label{lem:zero eigenvalue}
	The dual variable $\Lambda$ from Theorem~\ref{thm:kkt conditions} has zero eigenvalue corresponding to $\begin{bmatrix}
		\underline{\theta} & 1
	\end{bmatrix}^\T$.
\end{lemma}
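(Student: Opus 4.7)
The plan is to verify $\Lambda v = 0$ directly for $v = [\,\underline{\theta}^\T, 1\,]^\T$ by computing $\Lambda v$ block by block and invoking the stationarity condition for the optimization problem~\eqref{eq:underline theta and Jstar}. The key algebraic observation is that each $\widetilde{A}_i$ factors as $u_i u_i^\T$ with $u_i = [\,\widetilde{X}_i^\T,\, -y_i\,]^\T$, so that $v^\T u_i = \widetilde{X}_i^\T \underline{\theta} - y_i$ and hence $\widetilde{A}_i v = (\widetilde{X}_i^\T \underline{\theta} - y_i)\, u_i$. Summing over $J^*$ breaks into a first-$k$ block equal to $\sum_{(X_i,y_i) \in J^*} (\widetilde{X}_i^\T \underline{\theta} - y_i)\widetilde{X}_i$ and a last coordinate equal to $-\sum_{(X_i,y_i)\in J^*} (\widetilde{X}_i^\T \underline{\theta} - y_i)\, y_i$.

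For the subgradient $\zeta$, I take the structured choice $\zeta = s s^\T$ with $s = [\,\xi^\T,\, 1\,]^\T$, where $\xi \in \partial \|\underline{\theta}\|_1$ is the particular $\ell_1$-subgradient arising from the stationarity of problem~\eqref{eq:underline theta and Jstar}. This is a valid element of $\partial \|\widetilde\vartheta\|_1$ at $\uvartheta = v v^\T$: the entries of $\uvartheta$ are $v_j v_k$, and whenever $v_j v_k \ne 0$ one has $\operatorname{sign}(v_j v_k) = s_j s_k$, while $|s_j s_k| \le 1$ holds everywhere. A direct calculation then gives $\zeta v = (1 + \|\underline{\theta}\|_1)\, s$.

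Putting these together, the first $k$ entries of $\Lambda v$ are
\[
  \sum_{(X_i, y_i) \in J^*} (\widetilde{X}_i^\T \underline{\theta} - y_i)\, \widetilde{X}_i \;+\; \lambda(1 + \|\underline{\theta}\|_1)\, \xi,
\]
which vanishes by the stationarity KKT condition of~\eqref{eq:underline theta and Jstar} (the problem is convex in $\widetilde\theta$ for fixed $J = J^*$). For the $(k+1)$-th entry I plug in the explicit value $\mu_{k+1,k+1} = -\sum (\widetilde{X}_i^\T \underline{\theta} - y_i)^2 - \lambda(1+\|\underline{\theta}\|_1)^2$ (obtained from $\langle\cdot, vv^\T\rangle$ using the same factorization) and obtain, after collecting, $-\underline{\theta}^\T \big[\sum (\widetilde{X}_i^\T\underline{\theta} - y_i)\widetilde{X}_i\big] - \lambda(1+\|\underline{\theta}\|_1)\|\underline{\theta}\|_1$. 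Reinvoking stationarity replaces the bracket by $-\lambda(1+\|\underline{\theta}\|_1)\xi$, and then $\underline{\theta}^\T \xi = \|\underline{\theta}\|_1$ (a standard $\ell_1$-subgradient identity) produces exact cancellation.

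The main obstacle is cosmetic bookkeeping rather than conceptual: one must select the self-consistent $\zeta = s s^\T$ so that the stationarity of the auxiliary problem~\eqref{eq:underline theta and Jstar} propagates from the first $k$ coordinates into the $(k+1)$-th coordinate through the definition of $\mu_{k+1, k+1}$. Once that choice is made, the computation is a single line per block. Note that this argument gives $\Lambda v = 0$ entrywise, which is strictly stronger than the consequence $v^\T \Lambda v = 0$ that follows trivially from complementary slackness $\langle \Lambda, \uvartheta\rangle = 0$; the stronger statement is what is needed before positive semidefiniteness of $\Lambda$ has been established.
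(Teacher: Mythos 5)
Your proof is correct and follows essentially the same route as the paper: take $\zeta = ss^\T$ with $s = [\,\omega^\T, 1\,]^\T$ for the $\ell_1$-subgradient $\omega$ from the stationarity condition of problem~\eqref{eq:underline theta and Jstar}, then verify $\Lambda[\,\underline{\theta}^\T,1\,]^\T = 0$ blockwise using that stationarity condition and the definition of $\mu_{k+1,k+1}$. The only difference is that you spell out the algebra (the rank-one factorization $\widetilde{A}_i = u_iu_i^\T$, the identity $\underline{\theta}^\T\omega = \|\underline{\theta}\|_1$, and the check that $ss^\T$ is a valid subgradient of $\|\cdot\|_1$ at $\uvartheta$) that the paper compresses into ``after little algebraic manipulation.''
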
         

Our next result shows that second minimum eigenvalue of $\Lambda$ is strictly greater than zero. We use equation~\eqref{eq:underline theta for kkt} along with Haynesworth's inertia formula~\citep{haynsworth1968determination} to show this result.
\begin{lemma}[Informal]
	\label{lem:second eigenvalue}
	Under assumption \ref{assum:mutual incoherence} and by choosing $m \geq \Omega(k^3 \log^2 p)$, we ensure that second minimum eigenvalue of $\Lambda$ is strictly positive with high probability. 
\end{lemma}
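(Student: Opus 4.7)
The plan is to block-partition $\Lambda$ as
\[
\Lambda \;=\; \begin{bmatrix} \Lambda_{11} & \Lambda_{12} \\ \Lambda_{12}^{\T} & \Lambda_{22} \end{bmatrix},
\]
where $\Lambda_{11}\in\real^{k\times k}$ is the principal block indexed by $\supp(\theta^*)$ and $\Lambda_{22}$ is the scalar bottom-right entry. Using the formula from Theorem~\ref{thm:kkt conditions}, $\Lambda_{11} = \sum_{(X_i,y_i)\in J^*} \widetilde X_i \widetilde X_i^{\T} + \lambda\, \zeta_{11}$, where $\widetilde X_i$ is $X_i$ restricted to $\supp(\theta^*)$ and $\zeta_{11}$ is the top-left $k\times k$ block of the $\ell_1$ subgradient $\zeta$. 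The idea is to apply Haynsworth's inertia additivity formula relative to $\Lambda_{11}$, so that the second-smallest eigenvalue of $\Lambda$ is controlled by $\lambda_{\min}(\Lambda_{11})$ once the scalar Schur complement is pinned down.

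First I would verify $\Lambda_{11}\succ 0$ with high probability. Assumption~\ref{assum:positive definite Hessian} yields $\lambda_{\min}(\Sigma_{SS})\geq \alpha_1$, and standard sub-Gaussian concentration of the empirical covariance on the fixed $k$-dimensional support $J^*\subseteq\calM_c$ gives $\lambda_{\min}(\widehat\Sigma^{J^*}_{SS})\geq \alpha_1/2$ once $m\gtrsim k\log p$. Because $\zeta\in\partial\|\uvartheta\|_1$ satisfies $\|\zeta_{11}\|_\infty\leq 1$ entry-wise, $\|\lambda\zeta_{11}\|_{\mathrm{op}}\leq \lambda k$. Combining,
\[
\lambda_{\min}(\Lambda_{11}) \;\geq\; \tfrac{m\alpha_1}{2} - \lambda k,
\]
which is strictly positive for $\lambda = \Theta(\sqrt{m\log p})$ as soon as $m\gtrsim k^2\log p/\alpha_1^2$, and hence comfortably under $m\geq \Omega(k^3\log^2 p)$.

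With $\Lambda_{11}\succ 0$, Haynsworth's formula gives $\inertia(\Lambda)=\inertia(\Lambda_{11})+\inertia\bigl(\Lambda_{22}-\Lambda_{12}^{\T}\Lambda_{11}^{-1}\Lambda_{12}\bigr)$. To evaluate the scalar Schur complement, I would invoke Lemma~\ref{lem:zero eigenvalue}: the identity $\Lambda\,[\underline{\theta}^{\T},\,1]^{\T}=0$ decomposes blockwise as $\Lambda_{12} = -\Lambda_{11}\underline{\theta}$ and $\Lambda_{22}=\underline{\theta}^{\T}\Lambda_{11}\underline{\theta}$, so
\[
\Lambda_{22}-\Lambda_{12}^{\T}\Lambda_{11}^{-1}\Lambda_{12} \;=\; \underline{\theta}^{\T}\Lambda_{11}\underline{\theta} - \underline{\theta}^{\T}\Lambda_{11}\Lambda_{11}^{-1}\Lambda_{11}\underline{\theta} \;=\; 0
\]
as a deterministic identity (no probability). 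Therefore $\inertia(\Lambda)=(k,1,0)$: the single zero eigenvalue is the one from Lemma~\ref{lem:zero eigenvalue}, and the other $k$ eigenvalues are at least $\lambda_{\min}(\Lambda_{11})>0$, which is what the lemma claims. The only probabilistic step is the concentration for $\Lambda_{11}$, where the signal $m\widehat\Sigma^{J^*}_{SS}$ must dominate an operator-norm $\lambda k$ perturbation from $\lambda\zeta_{11}$; this is the main obstacle and is exactly what forces the stated sample-complexity trade-off. Everything else is a clean algebraic consequence of the zero-eigenvector identity plus block matrix inertia.
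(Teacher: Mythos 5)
Your proposal is correct and follows essentially the same route as the paper: block-partition $\Lambda$, apply Haynsworth's inertia additivity, observe that the scalar Schur complement vanishes (your derivation via the zero-eigenvector identity of Lemma~\ref{lem:zero eigenvalue} is equivalent to the paper's computation through $\mu_{k+1,k+1}$ and the stationarity condition), and conclude from positive definiteness of the top-left block. The only difference is that you treat $\lambda\zeta_{11}$ as a generic perturbation of operator norm at most $\lambda k$, whereas the paper uses the rank-one PSD structure $\zeta_{11}=\omega\omega^{\T}\succeq 0$ so that this term can only increase the minimum eigenvalue; your more conservative bound still closes under the stated choices of $\lambda$ and $m$.
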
  

Lemma~\ref{lem:second eigenvalue} along with Lemma~\ref{lem:zero eigenvalue} ensures that $\Lambda$ satisfies the dual feasibility condition. It also shows that $\Lambda$ has only one vector in its nullspace and thus, setting of $\uvartheta$ is unique in Theorem~\ref{thm:kkt conditions}. Now that we have verified that our setting for  primal and dual variables satisfies all the KKT conditions, we can start assembling all our results to provide a solution for optimization problem~\eqref{eq:invex relaxation}.

\subsection{Assembling solution to invex relaxation}
\label{subsec:solution to invex relaxation}

We can expand the construction of $\uvartheta$ to provide a setting of $\widehat{\vartheta}$ using equation~\eqref{eq:sparse vartheta}:
\begin{align}
	\label{eq:final sparse vartheta}
	\widehat{\vartheta} = \left[\begin{array}{c;{2pt/2pt}c;{2pt/2pt}c}
		\underline{\theta}\underline{\theta}^T & 0 & \underline{\theta} \\ \hdashline[2pt/2pt]
		0 & 0 & 0 \\ \hdashline[2pt/2pt]
		\underline{\theta}^T & 0 &  1
	\end{array} \right] \implies \widehat{\theta} = \begin{bmatrix}
	\underline{\theta} \\ 0
\end{bmatrix} 
\end{align}

The construction of $J^*$ validates the first statement of Theorem~\ref{thm:main theorem} and the construction of $\widehat{\theta}$ from \eqref{eq:final sparse vartheta} immediately shows that the second statement of Theorem~\ref{thm:main theorem} also holds. It only remains to verify strict dual feasibility for $\widehat{\vartheta}$.

\subsection{Strict dual feasibility}
\label{subsec:strict dual feasibility}

It suffices to show that $\supp^c(\widehat{\theta}) = \supp^c(\theta^*)$ to verify strict dual feasibility as we can use the construction of $J^*$, $\widehat{\vartheta}$ and $\widehat{\theta}$ to argue the same for $\widehat{\vartheta}$. We consider the following simplified optimization problem for our arguments:
\begin{align}
	\label{eq:opt problem for strict dual feasibility}
	\widehat{\theta} = \arg\min_{\theta \in \real^p} \sum_{(X_i, y_i) \in J^*} (y_i - \inner{X_i}{\theta}) + \lambda (\| \theta \|_1 + 1)^2
\end{align} 
Let $\omega$ be an element of the subdifferential set of $\| \theta \|_1$ at $\widehat{\theta}$. We denote $\omega$ restricted to $\supp(\theta^*)$ by $\widetilde{\omega}$ and $\omega$ restricted to $\supp^c(\theta^*)$ by $\bar{\omega}$. Our task is to show that $\| \bar{\omega} \|_\infty < 1$. Observe that all the samples in $J^*$ follow data generative process~\eqref{eq:clean model generative process}, i.e, $y_i = \inner{X_i}{\theta^*} + e_i, \forall i \in J^*$. We also note that $\widehat{\theta}$ satisfies the stationarity KKT condition for \eqref{eq:opt problem for strict dual feasibility}, i.e., $ \widehat{\Sigma}^{J^*} (\widehat{\theta} - \theta^*) - \frac{1}{m} \sum_{(X_i, y_i) \in J^*} X_i e_i + \frac{\lambda}{m}(\| \widehat{\theta} \|_1 + 1) \omega = 0 $. We segregate the entries in $\supp(\theta^*)$ and $\supp^c(\theta^*)$ and write:

\begin{align*}
	&\widehat{\Sigma}^{J^*}_{SS} (\widehat{\theta} - \theta^*) - \frac{1}{m} \sum_{(X_i, y_i) \in J^*} \widetilde{X}_i e_i + \frac{\lambda}{m}(\| \widehat{\theta} \|_1 + 1) \widetilde{\omega} = 0 \; \\
	&\widehat{\Sigma}^{J^*}_{S^cS} (\widehat{\theta} - \theta^*) - \frac{1}{m} \sum_{(X_i, y_i) \in J^*} \overline{X}_i e_i + \frac{\lambda}{m}(\| \widehat{\theta} \|_1 + 1) \bar{\omega} = 0 \;,
\end{align*}
where $\overline{X}_i$ denotes $X_i$ restricted to the rows in $\supp^c(\theta^*)$. A little algebraic manipulation gives us the following equation:
\begin{align}
	\label{eq:expression for omega bar}
	\begin{split}
	&\frac{\lambda}{m}(1 + \| \widehat{\theta} \|_1) \bar{\omega} = - \widehat{\Sigma}^{J^*}_{S^cS} \widehat{\Sigma}^{J^* -1}_{SS} \left( \frac{1}{m} \sum_{(X_i, y_i) \in J^*} \widetilde{X}_i e_i  - \frac{\lambda}{m}(1 + \| \widehat{\theta} \|_1) \widetilde{\omega} \right) +  \frac{1}{m} \sum_{(X_i, y_i) \in J^*} \overline{X}_i e_i
	\end{split} 
\end{align}
The next lemma shows that we can bound $\| \bar{\omega} \|_{\infty}$ by using assumptions~\ref{assum:positive definite Hessian}, \ref{assum:mutual incoherence} and bounding term in right hand side of \eqref{eq:expression for omega bar} using concentration inequalities.
\begin{lemma}[Informal]
	\label{lem:bound omegabar}
	Under assumptions~\ref{assum:positive definite Hessian}, \ref{assum:mutual incoherence} and after choosing $\lambda \geq \Omega(\sqrt{m \log p})$ and $m \geq \Omega(k^3 \log^2 p)$, the following bound on $\bar{\omega}$ from \eqref{eq:expression for omega bar} holds with high probability $ \| \bar{\omega} \|_{\infty} \leq 1 - \frac{\kappa}{4}$.
 \end{lemma}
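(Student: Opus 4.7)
The plan is to isolate $\bar{\omega}$ from equation~\eqref{eq:expression for omega bar}, bound each term via triangle inequality using (i) a finite-sample version of the mutual incoherence Assumption~\ref{assum:mutual incoherence}, (ii) $\|\widetilde{\omega}\|_\infty \leq 1$ (since $\omega$ is a subgradient of $\|\cdot\|_1$), and (iii) sub-Gaussian concentration for the noise terms $\tfrac{1}{m}\sum \widetilde X_i e_i$ and $\tfrac{1}{m}\sum \overline X_i e_i$. First I would divide both sides of~\eqref{eq:expression for omega bar} by $\tfrac{\lambda}{m}(1+\|\widehat\theta\|_1)$ to obtain
\begin{align*}
\bar{\omega} \;=\; \widehat{\Sigma}^{J^*}_{S^cS} \widehat{\Sigma}^{J^* -1}_{SS}\,\widetilde{\omega} \;+\; \frac{m}{\lambda(1+\|\widehat\theta\|_1)}\left[\,\frac{1}{m}\sum_{(X_i,y_i)\in J^*} \overline X_i e_i \;-\; \widehat{\Sigma}^{J^*}_{S^cS} \widehat{\Sigma}^{J^* -1}_{SS}\,\frac{1}{m}\sum_{(X_i,y_i)\in J^*} \widetilde X_i e_i\right].
\end{align*}
Taking the entry-wise $\infty$-norm and using $\|\widetilde\omega\|_\infty\le 1$ gives
\begin{align*}
\|\bar{\omega}\|_\infty \;\le\; \|\widehat{\Sigma}^{J^*}_{S^cS}\widehat{\Sigma}^{J^*-1}_{SS}\|_{\infty,\infty} \;+\; \frac{m}{\lambda(1+\|\widehat\theta\|_1)}\Bigl(1+\|\widehat{\Sigma}^{J^*}_{S^cS}\widehat{\Sigma}^{J^*-1}_{SS}\|_{\infty,\infty}\Bigr)\max\!\left\{\Bigl\|\tfrac{1}{m}\sum \widetilde X_i e_i\Bigr\|_\infty,\Bigl\|\tfrac{1}{m}\sum \overline X_i e_i\Bigr\|_\infty\right\}.
\end{align*}

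Next I would control the two ingredients separately. For the empirical incoherence, the standard approach (cf.\ \cite{wainwright2009sharp,ravikumar2010high}) is to write $\widehat{\Sigma}^{J^*}_{S^cS}\widehat{\Sigma}^{J^*-1}_{SS} - \Sigma_{S^cS}\Sigma_{SS}^{-1}$ as a telescoping sum in the perturbations $\widehat{\Sigma}^{J^*}_{SS} - \Sigma_{SS}$ and $\widehat{\Sigma}^{J^*}_{S^cS} - \Sigma_{S^cS}$, each of which can be bounded in $\|\cdot\|_{\infty,\infty}$ by $O(k\sqrt{\log p/m})$ using sub-Gaussian concentration on individual entries combined with a union bound. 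Combined with Assumption~\ref{assum:mutual incoherence} and using $\|\Sigma_{SS}^{-1}\|_{\infty,\infty}\lesssim \sqrt{k}/\alpha_1$, the sample size condition $m\ge \Omega(k^3\log^2 p)$ is precisely what is needed to drive this perturbation below $\kappa/2$, giving $\|\widehat{\Sigma}^{J^*}_{S^cS}\widehat{\Sigma}^{J^*-1}_{SS}\|_{\infty,\infty}\le 1-\tfrac{\kappa}{2}$ with high probability. For the noise terms, since each $X_{ij}$ and $e_i$ are sub-Gaussian with parameters $\sigma\sqrt{\Sigma_{jj}}$ and $\sigma_e$ respectively, standard sub-exponential concentration plus a union bound over the $p$ coordinates yields $\max\{\|\tfrac{1}{m}\sum\widetilde X_i e_i\|_\infty,\|\tfrac{1}{m}\sum\overline X_i e_i\|_\infty\}\le c\sqrt{\log p/m}$ with high probability.

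Plugging these in and using the trivial lower bound $1+\|\widehat\theta\|_1\ge 1$, the second term is at most $\frac{c(2-\kappa/2)\sqrt{m\log p}}{\lambda}$. The choice $\lambda\ge\Omega(\sqrt{m\log p})$ with a sufficiently large constant makes this bounded by $\kappa/4$, yielding
\begin{align*}
\|\bar{\omega}\|_\infty \;\le\; \Bigl(1-\tfrac{\kappa}{2}\Bigr) + \tfrac{\kappa}{4} \;=\; 1-\tfrac{\kappa}{4},
\end{align*}
as desired. The main obstacle is the finite-sample incoherence step: the conclusion must retain a slack of $\kappa/2$ on the population bound, which forces the perturbation bound on $\widehat{\Sigma}^{J^*}_{S^cS}\widehat{\Sigma}^{J^*-1}_{SS}-\Sigma_{S^cS}\Sigma_{SS}^{-1}$ to scale as $O(k\sqrt{k\log p/m})$; tracking all $k$-dependent constants (including the $\sqrt{k}$ loss in passing from operator norm to $\|\cdot\|_{\infty,\infty}$) is what produces the $k^3\log^2 p$ sample requirement. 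A minor secondary point is to verify that $\|\widehat\theta\|_1$ stays bounded (which follows from Lemma~\ref{lem:l2 norm close} plus $\|\theta^*\|_1\le M$) so that the denominator $1+\|\widehat\theta\|_1$ cannot be exploited adversarially; since we only use its lower bound $1$ in the argument above, no further work is required here.
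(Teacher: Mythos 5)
Your proposal is correct and follows essentially the same route as the paper: divide \eqref{eq:expression for omega bar} by $\tfrac{\lambda}{m}(1+\|\widehat\theta\|_1)\ge \tfrac{\lambda}{m}$, apply the triangle inequality with $\|\widetilde\omega\|_\infty\le 1$, invoke a finite-sample version of Assumption~\ref{assum:mutual incoherence} giving $\|\widehat{\Sigma}^{J^*}_{S^cS}\widehat{\Sigma}^{J^*-1}_{SS}\|_{\infty,\infty}\le 1-\kappa/2$ (the paper's Lemma~\ref{lem:sample mutual incoherence condition}, proved by exactly the perturbation decomposition you sketch), and control the two sub-exponential noise terms so that the choice of $\lambda$ reduces them to a total slack of $\kappa/4$. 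The only difference is cosmetic bookkeeping: the paper splits the two noise bounds asymmetrically as $\kappa/(8-4\kappa)$ and $\kappa/8$ rather than taking a common maximum, but the arithmetic lands at $1-\kappa/4$ either way.
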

This implies that $\widehat{\theta}$ has the same support as $\theta^*$ which in turn, means that the sparsity structure of $\widehat{\vartheta}$ in \eqref{eq:sparse vartheta} indeed holds with high probability. 

\subsection{Choosing $m$}
Our assumption is that the practitioner needs to know only a lower bound of the number of clean samples, which we believe is a reasonable assumption. We always recommend choosing $m = \Sigma(k^3 \log^2 p)$. There might be three possible cases. First, $r < \calO(k \log p)$ 
 and theoretical guarantees for that regime are impossible even for the case where all data is clean~\citep{wainwright2009sharp}. Second, $r \geq \Omega(k^3 \log^2 p)$ and our method works with high probability. In the third case, when $\Omega(k \log p) < r < \calO(k^3 \log^2 p)$
, our method can still be applied (but we do not provide any guarantees for this regime).

\begin{figure*}[!ht]
	\centering
	\begin{subfigure}{.33\textwidth}
		\centering
		\includegraphics[width=\linewidth]{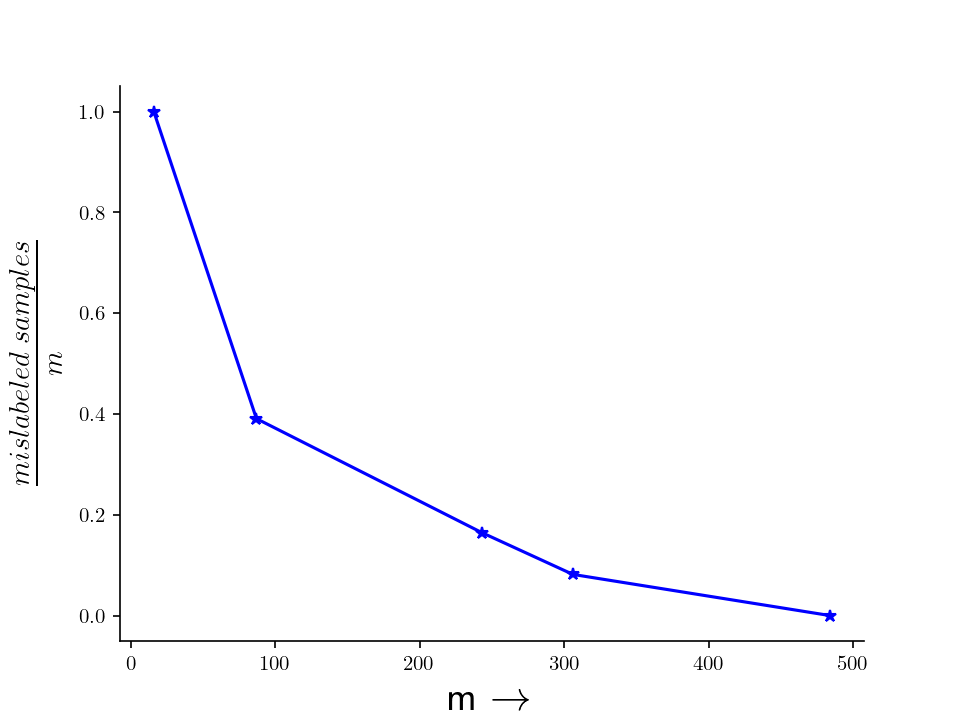}
		\caption{$p=50, k=4$}
		\label{fig:recnumsample}
	\end{subfigure}%
	\begin{subfigure}{.33\textwidth}
		\centering
		\includegraphics[width=\linewidth]{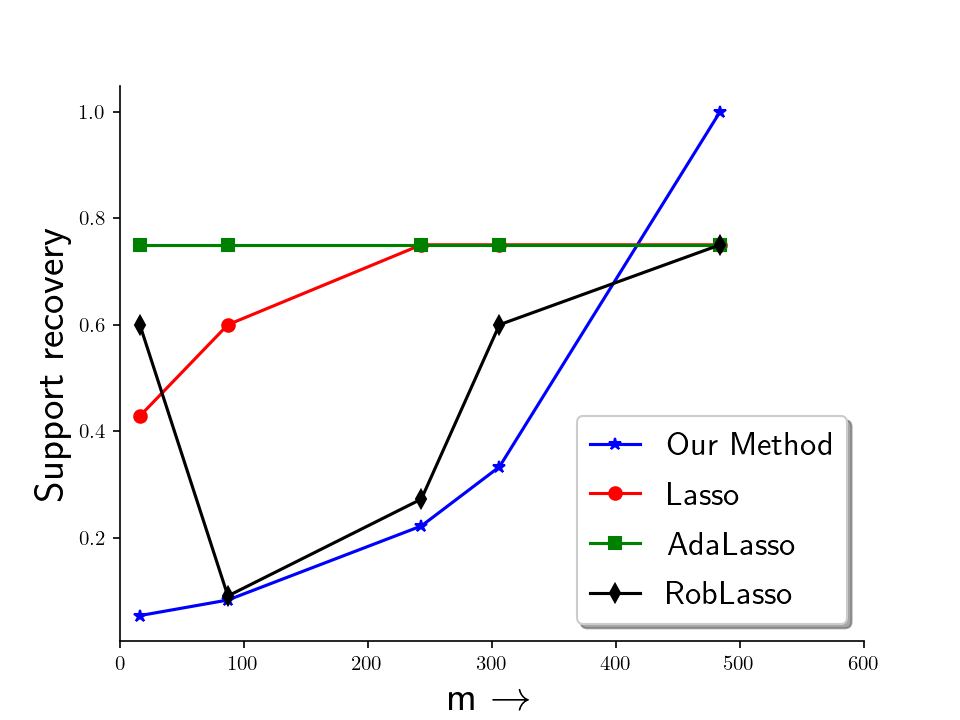}	
		\caption{$p=50, k=4$ }
		\label{fig:recnumsamplecp}
	\end{subfigure}%
	\begin{subfigure}{.33\textwidth}
		\centering
		\includegraphics[width=\linewidth]{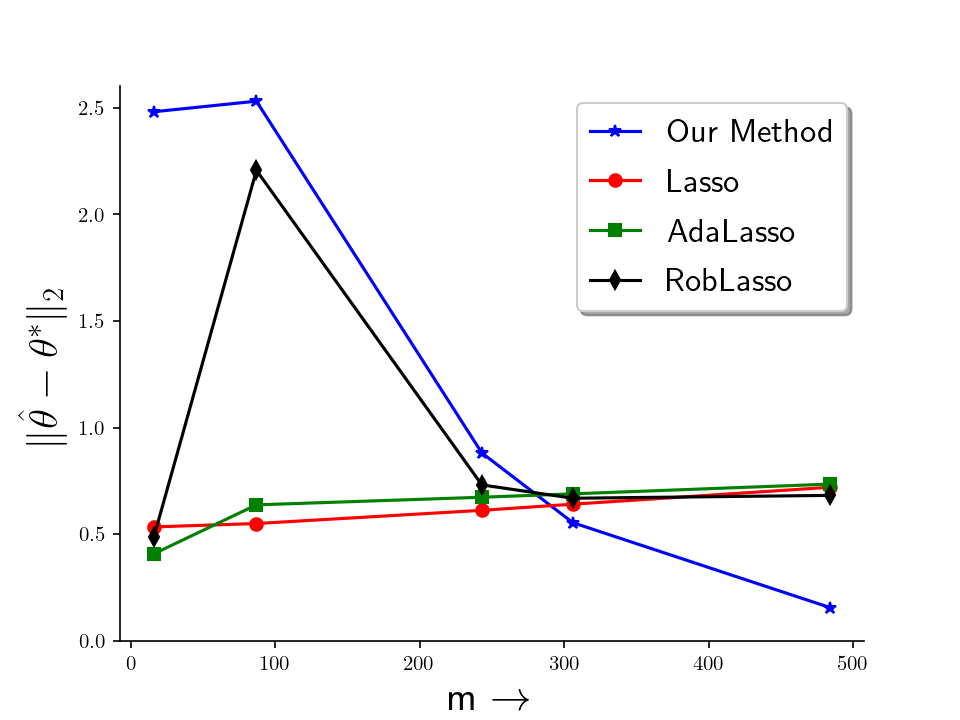}
		\caption{$p=50, k=4$}
		\label{fig:normerror}
	\end{subfigure}
        \begin{subfigure}{.33\textwidth}
		\centering
		\includegraphics[width=\linewidth]{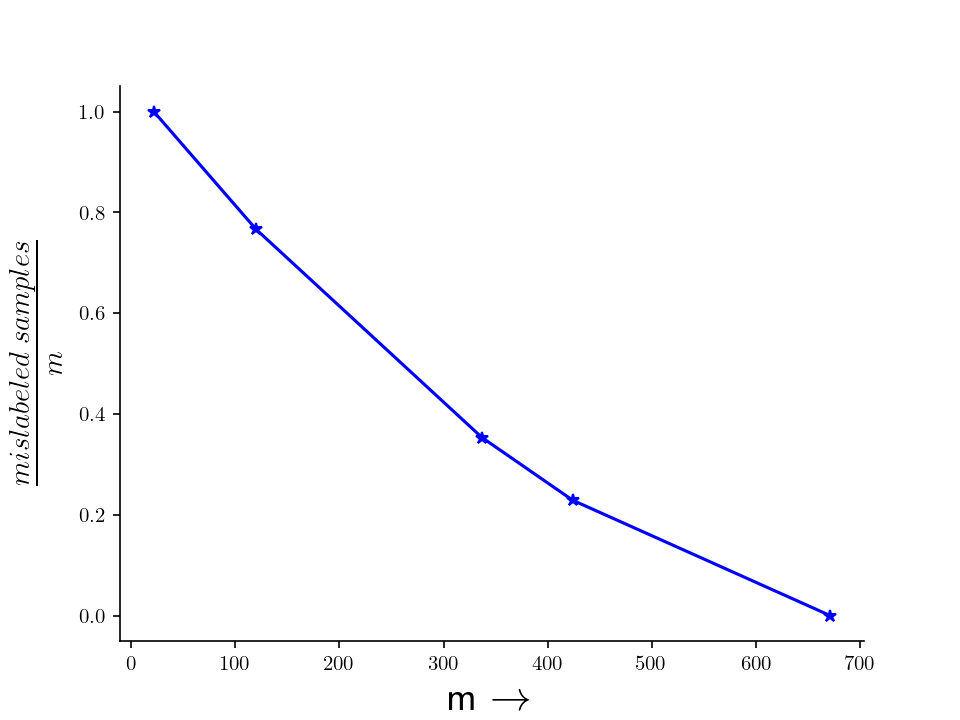}
		\caption{$p=100, k=4$}
		\label{fig:recnumsample1}
	\end{subfigure}%
	\begin{subfigure}{.33\textwidth}
		\centering
		\includegraphics[width=\linewidth]{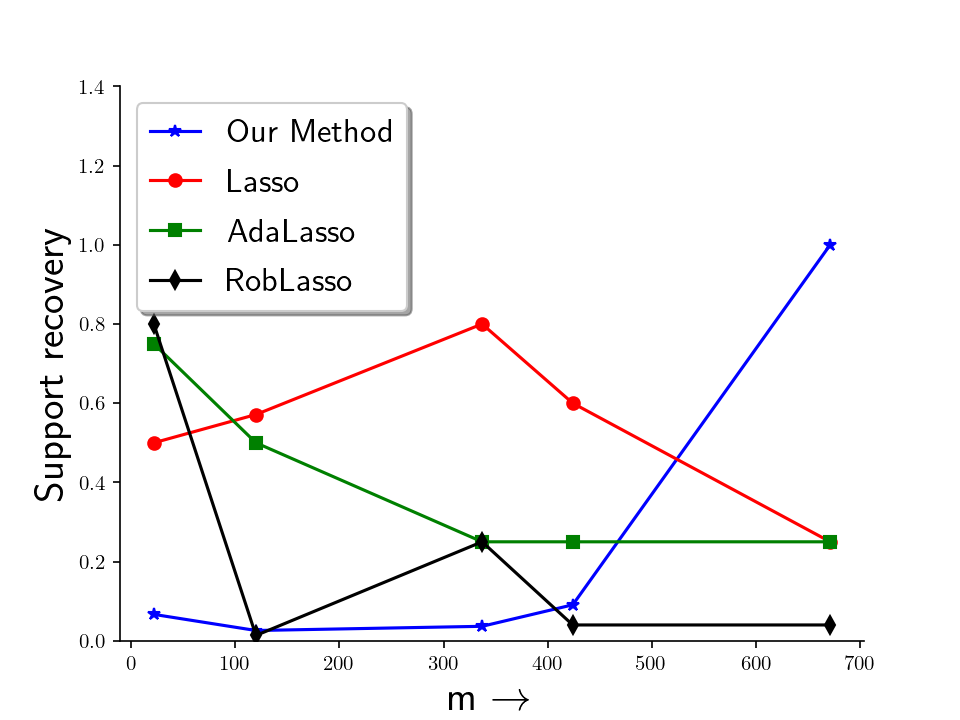}	
		\caption{$p=100, k=4$ }
		\label{fig:recnumsamplecp1}
	\end{subfigure}%
	\begin{subfigure}{.33\textwidth}
		\centering
		\includegraphics[width=\linewidth]{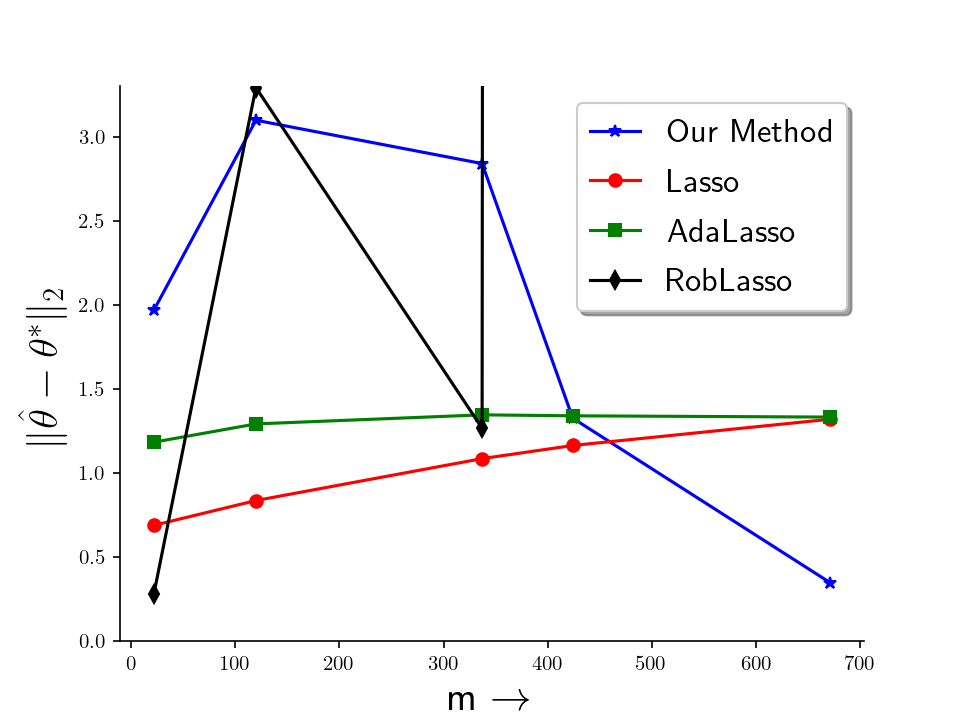}
		\caption{$p=100, k=4$}
		\label{fig:normerror1}
	\end{subfigure}
	\caption{(Left) Mistakes in clean sample recovery as a proportion of $m$ for $p=50, k=4$(top) and $p=100, k=4$(bottom) against $m$. (Middle) Support recovery (intersection/union) for $p=50, k=4$(top) and $p=100, k=4$(bottom) against $m$. (Right) Norm error for $p=50, k=4$(top) and $p=100, k=4$(bottom) against $m$. Comparison to standard lasso, AdaLasso~\citep{lambert2011robust} and RobLasso~\citep{chen2013robust} is also shown.}
	\label{fig:recovery}
\end{figure*}  

\section{Experimental validation}
\label{sec:experiments}

In this section, we validate our theoretical results by running numerical experiments. We also provide comparison of our method with standard lasso, adaptive lasso (AdaLasso) from~\cite{lambert2011robust} and robust lasso (RobLasso) from~\cite{chen2013robust}.
\paragraph{Data generation and setup.} We generate clean samples for our experiments by using zero mean standard normal distribution for the predictors and an independent zero mean additive Gaussian error. The true parameter vector is kept sparse with non-zero entries sampled uniformly at random between $[-1.1, -0.1] \cup [0.1, 1.1]$. The response is generated using equation~\eqref{eq:clean model generative process}. The samples for outlier model are generated by picking predictors uniformly at random from $[0, 1]$ and response from $[0, 5]$. Based on the dimension $p$, number of clean samples are kept to be at $1.1 \times 10^{1.5} \log^2 p$. The outliers are kept at half of this number. This is a difficult scenario where as many as $33\%$ samples are corrupted. According to our theory, we vary $m$ as $10^C \log^2 p$ where $C$ is a control parameter. The regularizer $\lambda$ is chosen according to our theory and follows a order of $\Omega(\sqrt{m \log p})$. It is kept same across all the baseline methods.

\paragraph{Experimental results.} We use projected gradient descent algorithm~\cite{duchi2009efficient} to solve problem~\eqref{eq:invex relaxation}. We first discover the recovery of clean samples -- a guarantee that only our method provides.  In Figure~\ref{fig:recnumsample} and \ref{fig:recnumsample1}, the number of mistakes in recovering clean samples (as a proportion of $m$) goes towards zero for both $p=50$ and $100$ respectively as $m$ increases. This is in-line with our theoretical guarantees. The support recovery is measured by computing $\frac{| \supp(\widehat{\theta}) \cap \supp(\theta^*)|}{|\supp(\widehat{\theta}) \cup \supp(\theta^*)|}$. In Figure~\ref{fig:recnumsamplecp} and~\ref{fig:recnumsamplecp1}, we see that support recovery for our method goes towards $1$ as $m$ increases which validates our theory. The two curves (for $p=50$ and $100$) for our method align quite nicely as is expected from our theory. We see that all the other baseline methods struggle to recover correct support in both the cases. Finally, we compare the norm error $\| \widehat{\theta} - \theta^* \|_2$ to measure the quality of our estimate of $\theta^*$. We observe in Figure~\ref{fig:normerror} and~\ref{fig:normerror1} that the norm error goes towards zero for our method as we increase $m$ for both the scenarios. Other baseline methods again struggle to reduce norm error in presence of outlier samples (RobLasso~\cite{chen2013robust} in fact diverges). These numerical experiments validate our theoretical results and also show that our method outperforms other baseline methods when there are many outliers in the samples.

\paragraph{Concluding remarks.}
In this paper, we have tackled the problem of outlier-robust lasso with a new point of view. We first introduced a combinatorial formulation and then proposed an invex relaxation for the same. The formulation is non-convex but it becomes tractable by using the properties of invex functions. We finally proposed a primal-dual witness framework for our invex relaxation to provide theoretical guarantees. In future it would be interesting to extend our formulation to a setting where $m > r$, i.e., we recover all the clean samples with possibly some outlier samples.

\bibliographystyle{apalike}
\bibliography{huber_lasso}


\newpage

\appendix

\section{Formal Statements and Proofs of Theorems and Lemmas}
\label{sec:formal statement and proofs}

\subsection{Formal Statement of Theorem~\ref{thm:main theorem}}
\label{subsec:formal main theorem}

\paragraph{Theorem~\ref{thm:main theorem}}
\emph{Let $(\hat{b}, \widehat{\vartheta})$ be the solution to optimization problem~\eqref{eq:invex relaxation}. Under assumptions \ref{assum:positive definite Hessian} and \ref{assum:mutual incoherence}, and after choosing $\lambda \geq \frac{64 \sigma \sigma_e}{\kappa} \sqrt{m \log p}$ and $ m \geq \Omega(\frac{k^3 \log^2 p}{\tau_0(\alpha_1, \kappa, \sigma, \Sigma )})$, the following statements hold true with probability at least $1 - \calO(\frac{1}{p})$:
	\begin{enumerate}
		\item There exists a $J \in \calJ$ such that $\hat{b}_i = 1$ when $(X_i, y_i) \in J$ and $\hat{b}_i = 0$ otherwise. 
		\item $\widehat{\vartheta}$ is a rank-1 matrix. In particular, $ \widehat{\vartheta} = \begin{bmatrix} \widehat{\theta} \\ 1 \end{bmatrix} \begin{bmatrix} \widehat{\theta}^T & 1 \end{bmatrix}$. 	Moreover,
		\begin{enumerate}
			\item $\supp^c(\widehat{\theta}) = \supp^c(\theta^*)$.
			\item $\| \widehat{\theta} - \theta^* \|_2 \leq \delta_m$ where $\delta_m  = (2 + M) \frac{2 \lambda \sqrt{k}}{\alpha_1 m}$.
		\end{enumerate} 
	\end{enumerate}
	Here $\tau_0(\alpha_1, \kappa, \sigma, \Sigma )$ is a constant which is independent of $p, k, m$ and $n$. 
}

\subsection{Formal Statement of Theorem~\ref{thm:kkt conditions}}
\label{subsec:formal kkt theorem}
\paragraph{Theorem~\ref{thm:kkt conditions}}
\emph{Under assumptions~\ref{assum:positive definite Hessian} and \ref{assum:mutual incoherence} and after choosing $\lambda \geq \frac{64 \sigma \sigma_e}{\kappa} \sqrt{m \log p}$ and $ m \geq \Omega(\frac{k^3 \log^2 p}{\tau_0(\alpha_1, \kappa, \sigma, \Sigma )})$, the following setting of primal-dual pair $(\hat{b}, \underline{\widehat{\vartheta}})$ and $(\Lambda, \mu, \beta, \gamma, \rho)$ satisfy all the KKT conditions for optimization problem~\eqref{eq:compact invex relaxation} with probability at least $1 - \calO(\frac{1}{p})$:
	\begin{enumerate}
		\item $\uvartheta = \begin{bmatrix}
			\underline{\theta} \\ 1
		\end{bmatrix} \begin{bmatrix}
			\underline{\theta}^T & 1
		\end{bmatrix}$
		\item $\hat{b}_i = 1,\; \beta_i = 0,\; \gamma_i = \nu - \inner{\widetilde{A}_i}{\uvartheta},  \; (X_i, y_i)\in J^*$
		\item $\hat{b}_i = 0, \; \gamma_i = 0,\; \beta_i = \inner{\widetilde{A}_i}{\uvartheta} - \nu, \; (X_i, y_i) \notin J^*$
		\item $\max_{(X_i, y_i) \in J^*} \inner{\widetilde{A}_i}{\uvartheta} \leq \nu \leq \min_{(X_i, y_i) \notin J^*} \inner{\widetilde{A}_i}{\uvartheta} $
		\item $\Lambda = \sum_{(X_i, y_i) \in J^*} \widetilde{A}_i + \lambda \zeta + \mu$
		\item $\mu_{k+1, k+1} = - \inner{\sum_{(X_i, y_i) \in J^*} \widetilde{A}_i + \lambda \zeta}{\uvartheta}$ 
	\end{enumerate}
Here $\tau_0(\alpha_1, \kappa, \sigma, \Sigma )$ is a constant which is independent of $p, k, m$ and $n$.
}

\subsection{Proof of Lemma~\ref{lem:l2 norm close}}
\label{subsec: proof of lemma norm close}

\paragraph{Lemma~\ref{lem:l2 norm close}}
\emph{Under assumptions~\ref{assum:positive definite Hessian} and \ref{assum:mutual incoherence} and if we choose $\lambda \geq 8 \sigma \sigma_e \sqrt{m \log p})$ and $ m \geq \Omega(\frac{k^3 \log^2 p}{\tau_1(\alpha_1, \kappa, \sigma, \Sigma )})$, then with probability at least $1 - \calO(\frac{1}{p})$, $	\| \underline{\theta} - \underline{\theta}^* \|_2 \leq \delta_m $, 
	where $\delta_m  = (2 + M) \frac{2 \lambda \sqrt{k}}{\alpha_1 m}$.  Here $\tau_1(\alpha_1, \kappa, \sigma, \Sigma )$ is a constant which is independent of $p, k, m$ and $n$.
}
\begin{proof}
	The optimization problem~\eqref{eq:underline theta and Jstar} can be rewritten in the following way:
\begin{align*}
	\begin{split}
		\underline{\theta} =& \arg\min_{\widetilde{\theta} \in \real^k} \sum_{(X_i, y_i) \in J^*} (y_i - \inner{\widetilde{X}_i}{\widetilde{\theta}})^2 +  \lambda (\|\widetilde{\theta}\|_1 + 1)^2 \;.
	\end{split}  
\end{align*}
Then $\underline{\theta}$ must satisfy the stationarity KKT condition, i.e., 
\begin{align}
	\label{eq:underline theta for kkt}
	- \sum_{(X_i, y_i) \in J^*}\widetilde{X}_i (y_i - \inner{\widetilde{X}_i}{\underline{\theta}}) + \lambda \omega  (\inner{\underline{\theta}}{\omega} + 1) = 0 \;,
\end{align} 
where $\omega$ is an element of the subdifferential set of $\| \widetilde{\theta} \|_1$ at $\underline{\theta}$. Since $(X-i, y_i) \in J^*$, we can substitute $y_i = \inner{\widetilde{X}_i}{\underline{\theta}^*} + e_i$. With little manipulation and by use of norm-inequalities, we can rewrite this as
\begin{align}
	\| \underline{\theta} - \underline{\theta}^*  \|_2 \leq \| \left(\widehat{\Sigma}_{SS}^{J^*} + \frac{\lambda}{m} \omega \omega^T\right)^{-1}  \|_2 \left( \| \frac{1}{m} \sum_{(X_i, y_i) \in J^*} \widetilde{X}_i e_i \|_2 + \| \frac{1}{m} \lambda \omega \omega^T \underline{\theta}^* \|_2 + \| \frac{\lambda}{m} \omega \|_2 \right)
\end{align}
We observe that by using $ m \geq \Omega(\frac{k^3 \log^2 p}{\tau_1(\alpha_1, \kappa, \sigma, \Sigma )})$ and Weyl's inequality, we have $\| \left(\widehat{\Sigma}_{SS}^{J^*} + \frac{\lambda}{m} \omega \omega^T\right)^{-1}  \|_2 \leq \frac{2}{\alpha_1}$. Thus,
\begin{align}
	\| \underline{\theta} - \underline{\theta}^*  \|_2 \leq \frac{2}{\alpha_1} \left( \| \frac{1}{m} \sum_{(X_i, y_i) \in J^*} \widetilde{X}_i e_i \|_2 + \frac{\lambda \sqrt{k}}{m}  \|  \underline{\theta}^* \|_1 +  \frac{\lambda \sqrt{k}}{m}  \right)
\end{align}
Next we bound $\| \frac{1}{m} \sum_{(X_i, y_i) \in J^*} \widetilde{X}_i e_i \|_2$.

\begin{lemma}
	\label{lem:Xe l2}
	If $\lambda \geq 8 \sigma \sigma_e \sqrt{m \log p}$, then $\| \frac{1}{m} \sum_{(X_i, y_i) \in J^*} \widetilde{X}_i e_i \|_2 \leq \sqrt{k} \frac{\lambda}{m}$ with probability at least $1 - \calO(\frac{1}{p})$.
\end{lemma}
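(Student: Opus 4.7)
The plan is to first reduce the $\ell_2$-norm bound to an $\ell_\infty$-norm bound using the elementary inequality $\|v\|_2 \leq \sqrt{k}\|v\|_\infty$ for every $v \in \real^k$. Under this reduction, it suffices to show $\|\sum_{i \in J^*} \widetilde{X}_i e_i\|_\infty \leq \lambda$ with probability at least $1 - \calO(1/p)$, i.e., to control each coordinate $j \in S$ of the sum $\sum_{i \in J^*} X_{ij} e_i$ separately at level $\lambda = 8\sigma\sigma_e\sqrt{m\log p}$, since dividing by $m$ and reapplying the norm inequality then yields the target $\sqrt{k}\lambda/m$ bound.

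Next, I would argue that each summand $X_{ij} e_i$ (for a clean sample $i$) is a zero-mean sub-exponential random variable. Indeed, independence of $X_i$ and $e_i$ within a clean sample, together with both being centered, makes the product mean zero; and the standard fact that a product of two sub-Gaussians is sub-exponential yields $\|X_{ij}e_i\|_{\psi_1} \lesssim \sigma \sigma_e \sqrt{\Sigma_{jj}}$ with $X_{ij}/\sqrt{\Sigma_{jj}}$ sub-Gaussian of parameter $\sigma$ and $e_i$ sub-Gaussian of parameter $\sigma_e$. For a fixed subset $J \subseteq \calM_c$ of size $m$, Bernstein's inequality for centered sub-exponentials then gives
\[
\prob\left[\left|\sum_{i \in J} X_{ij} e_i\right| > t\right] \leq 2 \exp\left(-c \min\left(\frac{t^2}{m \sigma^2 \sigma_e^2 \Sigma_{jj}},\ \frac{t}{\sigma \sigma_e \sqrt{\Sigma_{jj}}}\right)\right).
\]
Plugging in $t = 8 \sigma \sigma_e \sqrt{m \log p}$ and noting that the sample-size assumption $m \geq \Omega(k^3 \log^2 p)$ keeps us in the Gaussian regime of Bernstein, the right-hand side is $p^{-\Omega(1)}$ with a constant that can be made as large as desired. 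A union bound over the $k \leq p$ coordinates in $S$ then delivers the desired coordinate-wise control with probability at least $1 - \calO(1/p)$.

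The main obstacle is that $J^*$ is itself random, determined jointly with $\underline{\theta}$ by the optimization in \eqref{eq:underline theta and Jstar} and thus a function of both $X$ and $e$; Bernstein only directly controls a fixed subset $J$. The cleanest way to handle this is either to make the bound uniform over all $\binom{r}{m} \leq \binom{n}{m}$ candidate subsets by absorbing the extra $\log \binom{n}{m}$ term into the Bernstein exponent — which is affordable because a modestly larger constant in $\lambda$ (already budgeted in the factor of $8$ rather than $4$) accommodates it — or, appealing to the i.i.d.\ exchangeability of the clean samples, to reduce the analysis to the canonical choice $J = \{1,\dots,m\}$ without loss of generality. With either resolution in place, combining the coordinate-wise Bernstein bound with the initial $\sqrt{k}\|\cdot\|_\infty$ reduction and dividing by $m$ yields $\|\tfrac{1}{m}\sum_{i \in J^*}\widetilde{X}_i e_i\|_2 \leq \sqrt{k}\lambda/m$, completing the proof.
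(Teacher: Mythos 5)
Your core argument is the same as the paper's: reduce $\|\cdot\|_2$ to $\sqrt{k}\|\cdot\|_\infty$, observe that each coordinate is a sum of products of independent sub-Gaussians and hence sub-exponential, apply a Bernstein-type tail bound at $t=\lambda/m$, and union-bound over the $k$ coordinates of $S$. That is exactly the route the paper takes, and this part of your write-up is fine.

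Where you go beyond the paper is in flagging that $J^*$ is data-dependent, which is a legitimate concern that the paper silently ignores by treating $J^*$ as fixed. However, neither of your proposed resolutions actually works. A union bound over all $\binom{n}{m}$ candidate subsets costs an additive $\log\binom{n}{m}\approx m\log(n/m)$ in the exponent, whereas the Bernstein exponent at $t=8\sigma\sigma_e\sqrt{m\log p}$ is only of order $\log p$; absorbing a term of order $m$ would force $\lambda$ to scale like $m$ rather than $\sqrt{m\log p}$, which would destroy the rate $\delta_m\to 0$, so this is not a matter of adjusting the constant $8$. The exchangeability reduction also fails: exchangeability equates the law of any \emph{fixed} $m$-subset with that of $\{1,\dots,m\}$, but $J^*$ is an argmin computed from the very data $(X,e)$ being concentrated, and conditioning on a data-dependent selection changes the distribution of the selected summands. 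A correct repair would have to exploit more structure (e.g., that $J^*\subseteq\calM_c$ and that the bound is really needed uniformly only over quantities whose fluctuation is controlled by the full clean sample, or a sample-splitting/leave-one-out device); as written, your fix does not close the gap you correctly identified, and the paper does not close it either.
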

\begin{proof}
	We take the $i$-th entry of $\frac{1}{m} \sum_{(X_j, y_j) \in J^*} \widetilde{X}_j e_j$ for some $i \in S$, i.e., $| \frac{1}{n} \sum_{(X_j, y_j) \in J^*} X_{ji} e_j |$. 		
	Recall that $X_{ji}$ is a sub-Gaussian random variable with parameter $\sigma$ and $e_j$ is a sub-Gaussian random variable with parameter $\sigma_e$. Then, $\frac{X_{ji}}{\sigma}\frac{e_j}{\sigma_e}$ is a sub-exponential random variable with parameters $(4\sqrt{2}, 2)$. Using the concentration bounds for the sum of independent sub-exponential random variables~\citep{wainwright2019high}, we can write:
	\begin{align}
		\begin{split}
			\prob( | \frac{1}{m} \sum_{(X_j, y_j) \in J^*} \frac{X_{ji}}{\sigma}\frac{e_j}{\sigma_e} | \geq t) \leq 2 \exp(- \frac{m t^2}{64}), \; 0 \leq t \leq 8
		\end{split}
	\end{align} 
	Taking a union bound across $i \in S$:
	\begin{align}
		\begin{split}
			&\prob( \exists i \in S \mid | \frac{1}{m} \sum_{(X_j , y_j) \in J^*} \frac{X_{ji}}{\sigma}\frac{e_j}{\sigma_e} | \geq t) \leq 2 k \exp(- \frac{m t^2}{64}),\; 0 \leq t \leq 8
		\end{split}
	\end{align}
	
	It follows that $\| \frac{1}{m} \sum_{(X_i, y_i) \in J^*} \widetilde{X}_i e_i \|_2 \leq \sqrt{k} t$ with probability at least $1 - 2k \exp(- \frac{mt^2}{64 \sigma^2\sigma_e^2})$ for some $0 \leq t \leq 8 \sigma \sigma_e$. Taking $t = \frac{\lambda}{m}$, we get the desired result.
\end{proof}
Using the above bound it follows that $\| \underline{\theta} - \underline{\theta}^* \|_2 \leq (2 + M) \frac{2 \lambda \sqrt{k}}{\alpha_1 m}$.
\end{proof}

\subsection{Proof of Lemma~\ref{lem:nu is feasible}}
\paragraph{Lemma~\ref{lem:nu is feasible}}
\emph{Under assumptions~\ref{assum:positive definite Hessian} and \ref{assum:mutual incoherence} and if we choose $\lambda \geq 8 \sigma \sigma_e \sqrt{m \log p}$ and  $ m \geq \Omega(\frac{k^3 \log^2 p}{\tau_1(\alpha_1, \kappa, \sigma, \Sigma )})$, setting of $\nu$ in equation~\eqref{eq:feasibility of nu} is feasible with probability at least $1 - \calO(\frac{1}{p})$ for sufficiently large $\rho$.  Here $\tau_1(\alpha_1, \kappa, \sigma, \Sigma )$ is a constant which is independent of $p, k, m$ and $n$.
}
\begin{proof}
	It suffices to show that for $\underline{\theta}$ coming from optimization problem~\eqref{eq:underline theta and Jstar}, for all $(X_c, y_c) \in \calM_c$ and for all $(X_o, y_o) \in \calM_o$ we have 
	\begin{align}
		f(\tilde{X}_c, y_c, \underline{\theta}) \leq f(\tilde{X}_o, y_o, \underline{\theta}).
	\end{align}
	We start with the following.
	\begin{align}
		\begin{split}
			f(\tilde{X}_o, y_o, \underline{\theta}) - f(\tilde{X}_c, y_c, \underline{\theta}) =& (y_o - \inner{\tilde{X}_o}{\underline{\theta}})^2 - (y_c - \inner{\tilde{X}_c}{\underline{\theta}})^2 \\
			&= (y_o - \inner{\tilde{X}_o}{ \underline{\theta}^* - \underline{\theta}^* + \underline{\theta} })^2 - (y_c - \inner{\tilde{X}_c}{ \underline{\theta}^* - \underline{\theta}^* + \underline{\theta}})^2\\
			&\geq (y_o - \inner{\tilde{X}_o}{\underline{\theta}^*})^2 - (y_c - \inner{\tilde{X}_c}{\underline{\theta}^*})^2  - ( - \underline{\theta}^* + \underline{\theta})^T \widetilde{X}_c \widetilde{X}_c^T \\
            &( - \underline{\theta}^* + \underline{\theta}) + 2 (y_o - \inner{\widetilde{X}_o}{\theta^*}) \widetilde{X}_o^T(\underline{\theta} - \underline{\theta}^* ) - 2 (y_c - \inner{\widetilde{X}_c}{\theta^*}) \widetilde{X}_c^T \\
            &(\underline{\theta} - \underline{\theta}^* ) \\
			&\geq \rho - m \alpha_2 \| \underline{\theta} - \underline{\theta}^* \|^2 - 2  \| (y_o - \inner{\widetilde{X}_o}{\theta^*}) \widetilde{X}_o \| \| \underline{\theta} - \underline{\theta}^* \| \\
            &- 2 e_c \widetilde{X}_c^T ( - \underline{\theta}^* + \underline{\theta})
		\end{split}
	\end{align}  

Next, we bound $| e_c \widetilde{X}_c^T ( - \underline{\theta}^* + \underline{\theta})|$.
\begin{lemma}
	For fixed $\|( - \underline{\theta}^* + \underline{\theta})  \|_2$, $\prob(| e_c \widetilde{X}_c^T ( - \underline{\theta}^* + \underline{\theta})| \leq \frac{\rho}{4})$ with probability at least $1 - \calO(\frac{1}{p})$. 
\end{lemma}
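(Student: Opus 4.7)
The plan is to exhibit $e_c \widetilde{X}_c^\T v$, with $v := \underline{\theta} - \underline{\theta}^*$, as a sub-exponential random variable whose scale shrinks with $\|v\|_2$, and then apply a Bernstein-type tail bound together with Lemma~\ref{lem:l2 norm close} to push the concentration radius below $\rho/4$ with probability at least $1 - \calO(1/p)$.

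First I would condition on $v$ (treating its norm as fixed, as the lemma statement permits) and decompose the random variable as $e_c \cdot Z$, where $Z = \widetilde{X}_c^\T v = \sum_{i \in S} X_{ci} v_i$. Each $X_{ci}/\sqrt{\Sigma_{ii}}$ is sub-Gaussian with parameter $\sigma$ and the joint vector has covariance $\Sigma_{SS} \preceq \alpha_2 I$ by Assumption~\ref{assum:positive definite Hessian}, so standard properties of sub-Gaussian vectors give that $Z$ is sub-Gaussian with parameter of order $\sigma \sqrt{v^\T \Sigma_{SS} v} \leq \sigma \sqrt{\alpha_2}\, \|v\|_2$. Since $e_c$ is independent of $X_c$ and sub-Gaussian with parameter $\sigma_e$, the product $e_c Z$ is sub-exponential with parameters proportional to $\sigma_e \sigma \sqrt{\alpha_2}\, \|v\|_2$, exactly in the spirit of the calculation used in Lemma~\ref{lem:Xe l2}.

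The Bernstein bound then yields
\[
\prob\bigl(|e_c \widetilde{X}_c^\T v| \geq t\bigr) \leq 2 \exp\left(- c \min\left( \tfrac{t^2}{\sigma_e^2 \sigma^2 \alpha_2 \|v\|_2^2},\ \tfrac{t}{\sigma_e \sigma \sqrt{\alpha_2}\, \|v\|_2} \right)\right)
\]
for an absolute constant $c>0$. Setting $t = \rho/4$ and invoking Lemma~\ref{lem:l2 norm close} to bound $\|v\|_2 \leq \delta_m = \calO(\sqrt{k \log p / m})$, the hypothesis $m \geq \Omega(k^3 \log^2 p)$ forces $\|v\|_2 = \calO(1/(k \sqrt{\log p}))$. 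This is small enough that, for any $\rho$ exceeding a fixed constant independent of $p$ (this is what ``sufficiently large $\rho$'' buys us in Lemma~\ref{lem:nu is feasible}), the linear term $t/(\sigma_e \sigma \sqrt{\alpha_2}\|v\|_2)$ dominates $\log p$, delivering the advertised tail probability $1 - \calO(1/p)$.

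The main obstacle will be the implicit dependence between $v$ and the sample $(X_c, e_c)$ when $(X_c, y_c) \in J^*$, since then $(X_c, e_c)$ itself was used to define $\underline{\theta}$. I would handle this either by a leave-one-out stability argument --- removing a single clean sample perturbs $\underline{\theta}$ by only $\calO(1/m)$ in $\ell_2$, so the induced dependence changes the sub-exponential scale by a negligible additive term --- or by taking literally the phrase ``for fixed $\|v\|_2$'' and conditioning on the high-probability event from Lemma~\ref{lem:l2 norm close}, after which $v$ is treated as deterministic while $X_c, e_c$ retain their original distribution. A final union bound over the at most $n$ clean samples $(X_c, y_c) \in \calM_c$ inflates the failure probability by a factor $n$, still absorbable into $\calO(1/p)$ under the chosen scalings, and completes the feasibility argument used in the proof of Lemma~\ref{lem:nu is feasible}.
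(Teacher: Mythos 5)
Your argument matches the paper's: treat $\Delta = \underline{\theta} - \underline{\theta}^*$ as fixed, observe that $e_c\widetilde{X}_c^\T\Delta$ is sub-exponential with scale proportional to $\sigma\sigma_e\|\Delta\|_2$, apply the tail bound at $t=\rho/4$, and conclude from the bound $\|\Delta\|_2 = \calO(\lambda\sqrt{k}/m)$ supplied by Lemma~\ref{lem:l2 norm close}. You are in fact more careful than the paper on points it silently elides: the paper invokes the quadratic (sub-Gaussian) regime of the sub-exponential tail outside its stated validity range $\rho \leq 32\sigma\sigma_e\|\Delta\|_2$, whereas your linear-regime reading is the one that actually survives for large $\rho$; and you flag both the dependence of $\Delta$ on the samples in $J^*$ and the union bound over $\calM_c$, neither of which the paper addresses.
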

\begin{proof}
	 Let $\Delta$ be $- \underline{\theta}^* + \underline{\theta}) $, then $\widetilde{X}_i^\T \Delta$ is a sub-Gaussian random variable with parameter $\sigma \| \Delta_1 \|$ and $e_i$ is a sub-Gaussian random variable with parameter $\sigma_e$.
	Then, $\frac{\widetilde{X}_i^T \Delta}{\sigma \| \Delta \|_2} \frac{e_i}{\sigma_e}$ is a sub-exponential random variable with parameters $(4\sqrt{2}, 2)$. Using the concentration bounds for the sum of independent sub-exponential random variables~\citep{wainwright2019high}, we can write:
	\begin{align}
		\begin{split}
			\prob( | \frac{\widetilde{X}_i^T \Delta}{\sigma \| \Delta \|_2}\frac{e_i}{\sigma_e} | \geq t) \leq 2 \exp(- \frac{t^2}{64}), \; 0 \leq t \leq 8
		\end{split}
	\end{align}  
	Taking $t = \frac{t}{\sigma \| \Delta \|_2\sigma_e }$, we get
	\begin{align}
		\begin{split}
			\prob( | e_i X_{i_P}^\T \Delta | \geq t) \leq 2 \exp(- \frac{t^2}{64 \sigma^2 \| \Delta \|_2^2 \sigma_e^2}), \; 0 \leq t \leq 8\sigma \| \Delta \|_2\sigma_e
		\end{split}
	\end{align}
	We take $t = \frac{\rho}{4}$, then
	\begin{align}
		\begin{split}
			\prob( | e_i \widetilde{X}_i^\T \Delta | \geq \frac{\rho}{4}) \leq 2 \exp(- \frac{\rho^2}{16 \times 64 \sigma^2 \| \Delta \|_2^2 \sigma_e^2}), \; 0 \leq \rho \leq 32\sigma \| \Delta \|_2\sigma_e
		\end{split}
	\end{align}
	Since  $\| \Delta \|_2$ is upper bounded with $\calO( \frac{\lambda}{m} \sqrt{k} )$ and $m$ is of order $\calO(k^3 \log^2 p)$, thus $\prob( | e_i \widetilde{X}_i^\T \Delta | \leq \frac{\rho}{4})$ with probability at least $1 - \calO(\frac{1}{p})$. 
\end{proof}

Thus as long as 
\begin{align}
	\rho \geq \frac{4}{3}\left( m \alpha_2 \delta_m^2 + 2  \| (y_o - \inner{\widetilde{X}_o}{\theta^*}) \widetilde{X}_o \| \delta_m  \right) \;,
\end{align}
the setting of $\nu$ is feasible with probability at least $1 - \calO(\frac{1}{p})$ .

\end{proof}

\subsection{Proof of Lemma~\ref{lem:zero eigenvalue}}

\paragraph{Lemma~\ref{lem:zero eigenvalue}}
	\emph{The dual variable $\Lambda$ from Theorem~\ref{thm:kkt conditions} has zero eigenvalue corresponding to $\begin{bmatrix}
		\underline{\theta} \\ 1
	\end{bmatrix}$.
}
\begin{proof}
	\label{proof:zero eigenvalue}
	The optimization problem~\eqref{eq:underline theta and Jstar} can be rewritten in the following way:
	$ \underline{\theta} = \arg\min_{\widetilde{\theta} \in \real^k} \sum_{(X_i, y_i) \in J^*} (y_i - \inner{\widetilde{X}_i}{\widetilde{\theta}})^2 +  \lambda (\|\widetilde{\theta}\|_1 + 1)^2$. 
	Then $\underline{\theta}$ must satisfy the stationarity KKT condition, i.e., 
	\begin{align}
		\label{eq:underline theta for kkt}
		- \sum_{(X_i, y_i) \in J^*}\widetilde{X}_i (y_i - \inner{\widetilde{X}_i}{\underline{\theta}}) + \lambda \omega  (\inner{\underline{\theta}}{\omega} + 1) = 0 \;,
	\end{align} 
	where $\omega$ is an element of the subdifferential set of $\| \widetilde{\theta} \|_1$. By construction, we treat $\zeta = \begin{bmatrix}
		\omega \\ 1
	\end{bmatrix} \begin{bmatrix}
	\omega^T & 1
\end{bmatrix}$. After little algebraic manipulation, we have $\left(\sum_{(X_i, y_i) \in J^*} \widetilde{A}_i + \lambda \zeta + \mu\right) \begin{bmatrix}
		\underline{\theta} \\ 1
	\end{bmatrix} = 0 \implies \Lambda \begin{bmatrix}
	\underline{\theta} \\ 1
\end{bmatrix} = 0 \;$. This completes our proof.
\end{proof} 

\subsection{Proof of Lemma~\ref{lem:second eigenvalue}}
\paragraph{Lemma~\ref{lem:second eigenvalue}}
\emph{Under assumption \ref{assum:mutual incoherence} and by choosing $m \geq \Omega(\frac{k + \log p}{\alpha_1^2})$, we ensure that second minimum eigenvalue of $\Lambda$ is strictly positive with probability at least $1 - \calO(\frac{1}{p})$. 
}
\begin{proof}
	We know that
	\begin{align}
		\begin{split}
			\Lambda &= \sum_{(X_i, y_i) \in J^*} \widetilde{A}_i + \lambda \zeta + \mu \\
			&= \sum_{(X_i, y_i) \in J^*} \begin{bmatrix} \widetilde{X}_i \widetilde{X}_i^T & - \widetilde{X}_i y_i \\ - y_i \widetilde{X}_i^T & y_i^2 \end{bmatrix} + \lambda \begin{bmatrix} \omega \omega^T & \omega \\ \omega^T & 1 \end{bmatrix} + \mu \\
			&= \begin{bmatrix} \sum_{(X_i, y_i) \in J^*} \widetilde{X}_i \widetilde{X}_i^T + \lambda \omega \omega^T & \sum_{(X_i, y_i) \in J^*} - \widetilde{X}_i y_i + \lambda \omega \\
				\sum_{(X_i, y_i) \in J^*} - y_i \widetilde{X}_i^T + \lambda_1 \omega^T & \sum_{(X_i, y_i) \in J^*} y_i^2 + \lambda + \mu_{k+1, k+1}
			\end{bmatrix}
		\end{split}
	\end{align}
	Also note that $\mu_{k+1, k+1} =  - \inner{ \sum_{(X_i, y_i) \in J^*} \widetilde{A}_i + \lambda \zeta }{\uvartheta} = - \sum_{(X_i, y_i) \in J^*}  (y_i - \widetilde{X}_i^T \underline{\theta} )^2 + \lambda (\| \underline{\theta} \|_1 + 1)^2   $. We also know that $\underline{\theta}$ satisfies the stationarity KKT condition, i.e.,
	\begin{align*}
		\begin{split}
			&\sum_{(X_i, y_i) \in J^*} \widetilde{X}_i (- y_i + \widetilde{X}_i^\T \underline{\theta}) + \omega \lambda (\omega^T \underline{\theta} + 1) = 0 \\
			&\underline{\theta}  = - ( \sum_{(X_i, y_i) \in J^*} \widetilde{X}_i \widetilde{X}_i^T + \lambda \omega \omega^T)^{-1} (\sum_{(X_i, y_i) \in J^*} - \widetilde{X}_i y_i + \lambda \omega)
		\end{split}
	\end{align*} 
	Using the stationarity KKT condition, we can simplify objective function value to $  \sum_{(X_i, y_i) \in J^*} y_i^2 +\\ (\sum_{(X_i, y_i) \in J^*} - y_i \widetilde{X}_i^T + \lambda \omega^T) \underline{\theta} + \lambda$. 	Now, we invoke Haynesworth's inertia additivity formula~\citep{haynsworth1968determination} to prove our claim. Let $R$ be a block matrix of the form $ R = \begin{bmatrix} A & B \\ B^T & C \end{bmatrix}$, then inertia of matrix $R$, denoted by $\inertia(R)$, is defined as the tuple $(|\eig_{+}(R)|, |\eig_{-}(R)|, |\eig_0(R)| )$ where $|\eig_+(R)|$ is the number of positive eigenvalues, $|\eig_-(R)|$ is the number of negative eigenvalues and $|\eig_0(R)|$ is the number of zero eigenvalues of matrix $R$. Haynesworth's inertia additivity formula is given as:
	\begin{align}
		\label{eq:Haynesworth inertia additivity formula}
		\begin{split}
			\inertia(R) = \inertia(A) + \inertia(C - B^T A^{-1} B)
		\end{split}
	\end{align}
	We take $A =  \sum_{(X_i, y_i) \in J^*} \widetilde{X}_i \widetilde{X}_i^T + \lambda \omega \omega^T  $, $B = \sum_{(X_i, y_i) \in J^*} - \widetilde{X}_i y_i + \lambda \omega $ and $C = \sum_{(X_i, y_i) \in J^*} y_i^2 + \lambda + \mu_{k+1, k+1}$. It should be noted that $C - B^T A^{-1} B$ evaluates to zero. Thus,
	\begin{align}
		\begin{split}
			\inertia(\Lambda) = \inertia(\sum_{(X_i, y_i) \in J^*} \widetilde{X}_i \widetilde{X}_i^T + \lambda \omega \omega^T ) + \inertia(0)
		\end{split}
	\end{align} 
	We note that $0$ has precisely one zero eigenvalue and no other eigenvalues. Moreover, with large enough $m$ and Weyl's inequality:
	\begin{align}
		\begin{split}
			\eig_{\min}(\sum_{(X_i, y_i) \in J^*} \widetilde{X}_i \widetilde{X}_i^T + \lambda \omega \omega^T) \geq \frac{\alpha_1}{2} > 0
		\end{split}
	\end{align} 
	with probability at least $1 - \calO(\frac{1}{p})$ as long as $m = \Omega(\frac{k + \log p}{\alpha_1^2})$. It follows that the second eigenvalue of $\Lambda$ is strictly positive.
\end{proof}

\subsection{Proof of Lemma~\ref{lem:bound omegabar}}

\paragraph{Lemma~\ref{lem:bound omegabar}}
\emph{Under assumptions~\ref{assum:positive definite Hessian}, \ref{assum:mutual incoherence} and after choosing $\lambda \geq \Omega(\sqrt{m \log p})$ and $m \geq \Omega(k^3 \log^2 p)$, the following bound on $\bar{\omega}$ from \eqref{eq:expression for omega bar} holds with high probability:
	\begin{align}
		\| \bar{\omega} \|_{\infty} \leq 1 - \frac{\kappa}{4}
	\end{align} 
}
\begin{proof}
	We start by rewriting equation~\eqref{eq:expression for omega bar}.
	\begin{align}
		\begin{split}
			&\frac{\lambda}{m}(1 + \| \widehat{\theta} \|_1) \bar{\omega} = - \widehat{\Sigma}^{J^*}_{S^cS} \widehat{\Sigma}^{J^* -1}_{SS} \left( \frac{1}{m} \sum_{(X_i, y_i) \in J^*} \widetilde{X}_i e_i  - \frac{\lambda}{m}(1 + \| \widehat{\theta} \|_1) \widetilde{\omega} \right) +  \frac{1}{m} \sum_{(X_i, y_i) \in J^*} \overline{X}_i e_i \;,
		\end{split} 
	\end{align}
which implies
	\begin{align}
	\begin{split}
		&\| \bar{\omega} \|_\infty \leq \| \widehat{\Sigma}^{J^*}_{S^cS} \widehat{\Sigma}^{J^* -1}_{SS} \|_{\infty, \infty} \left( \| \frac{1}{\lambda m} \sum_{(X_i, y_i) \in J^*} \widetilde{X}_i e_i \|_{\infty} +   \| \widetilde{\omega} \|_\infty \right) +  \| \frac{1}{\lambda m} \sum_{(X_i, y_i) \in J^*} \overline{X}_i e_i \|_\infty \;,
	\end{split} 
\end{align}
We provide following lemma to bound terms in right hand side.

\begin{lemma}
	\label{lem:bound X_se and X_s^ce}
	Let $\lambda \geq \frac{64 \sigma \sigma_e}{\kappa} \sqrt{m \log p}$. Then the following holds true:
	\begin{align*}
		\begin{split}
			&\prob(\| \frac{1}{\lambda } \frac{1}{m} \sum_{(X_i, y_i) \in J^*} \widetilde{X}_i e_i \|_{\infty} \geq \frac{\kappa}{8 - 4\kappa}) \leq \calO(\frac{1}{p}),\\
			& \prob(\| \frac{1}{\lambda} \frac{1}{m} \sum_{(X_i, y_i) \in J^*} \overline{X}_i e_i \|_{\infty} \geq \frac{\kappa}{8}) \leq \calO(\frac{1}{p}) 
		\end{split}
	\end{align*}
\end{lemma}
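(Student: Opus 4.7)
The plan is to reduce both statements to a Bernstein-type concentration for sums of independent sub-exponential random variables, following the template of Lemma~\ref{lem:Xe l2}. Fix any coordinate $i\in\{1,\ldots,p\}$. Since the predictors are sub-Gaussian with parameter $\sigma$ (after the diagonal rescaling) and the noise $e_j$ is sub-Gaussian with parameter $\sigma_e$, the normalized product $\frac{X_{ji}}{\sigma}\frac{e_j}{\sigma_e}$ is a mean-zero sub-exponential random variable with parameters $(4\sqrt{2},2)$. By independence across the clean samples,
\begin{align*}
\mathbf{P}\left(\left|\frac{1}{m}\sum_{(X_j,y_j)\in J^*} X_{ji} e_j\right|\geq t\right) \leq 2\exp\left(-\frac{m t^2}{64\sigma^2\sigma_e^2}\right), \qquad 0\leq t\leq 8\sigma\sigma_e.
\end{align*}

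For the first claim I would union bound over the $k$ coordinates of $S$ and set $t=\frac{\kappa\lambda}{(8-4\kappa)m}$. The hypothesis $\lambda\geq\frac{64\sigma\sigma_e}{\kappa}\sqrt{m\log p}$ forces $t$ to be at least a constant multiple of $\sigma\sigma_e\sqrt{\log p/m}$, so the exponent becomes $\Omega(\log p)$ and the union-bounded failure probability is $\calO(k/p^{c})=\calO(1/p)$ for an appropriate constant $c$. The second claim is structurally identical: union bound over the $p-k$ coordinates in $S^c$ and set $t=\frac{\kappa\lambda}{8 m}$; the extra $p$ in the union bound is absorbed by the slack in the concentration exponent coming from the same choice of $\lambda$. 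After dividing both sides by $\lambda$, the thresholds $\frac{\kappa}{8-4\kappa}$ and $\frac{\kappa}{8}$ stated in the lemma fall out.

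The one point I expect to be the main technical subtlety is that $J^*$ is obtained from the optimization problem~\eqref{eq:underline theta and Jstar} and is therefore data-dependent, so the concentration inequality above cannot be applied to $J^*$ as if it were a fixed index set. I would resolve this by an additional union bound over the family $\calJ=\{J\subseteq\calM_c:|J|=m\}$; the combinatorial overhead $\log\binom{r}{m}$ is either absorbed into the choice of $\lambda$, or one appeals to exchangeability of the i.i.d.\ samples in $\calM_c$ so that the distribution of the sum depends only on $|J^*|=m$ and the bound for any fixed $J$ transfers to $J^*$. Under $m=\Omega(k^3\log^2 p)$, either resolution preserves the $\calO(1/p)$ failure probability stated in the lemma.
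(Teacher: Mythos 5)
Your argument is, in substance, the same as the paper's: per-coordinate Bernstein concentration for the sub-exponential products $\widetilde{X}_{ji}e_j$, a union bound over the $k$ coordinates of $S$ and the $p-k$ coordinates of $S^c$, and the choice of $\lambda$ to drive the exponent past $\log p$. Two remarks. First, your threshold carries a spurious factor of $m$: the event $\|\frac{1}{\lambda}\frac{1}{m}\sum_{J^*}\widetilde{X}_i e_i\|_\infty \geq \frac{\kappa}{8-4\kappa}$ corresponds to $t=\frac{\kappa\lambda}{8-4\kappa}$ in your displayed inequality (whose left-hand side is already normalized by $1/m$), not $t=\frac{\kappa\lambda}{(8-4\kappa)m}$. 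With your smaller $t$ the exponent is only about $\log p$, so the union bound over the $p-k$ coordinates of $S^c$ gives $\calO(1)$ rather than $\calO(1/p)$; with the correct $t$ one has $\lambda t \gg 8\sigma\sigma_e$, the linear branch of the sub-exponential tail applies, and the exponent is of order $m\sqrt{m\log p}$, which absorbs both union bounds with room to spare. Second, your concern about the data-dependence of $J^*$ is legitimate --- the paper's proof silently applies the concentration bound to $J^*$ as if it were fixed --- but of your two proposed fixes only the union bound over $\calJ$ is sound: exchangeability does not transfer a fixed-$J$ bound to $J^*$, since $J^*$ is selected by an optimization over the very data in the sum, so its conditional law is not that of a fixed $m$-subset. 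The union bound over $\binom{r}{m}$ costs roughly $m\log(er/m)$ in the exponent, which the correctly computed exponent of order $m\sqrt{m\log p}$ dominates, so that route does close the gap.
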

\begin{proof}
	\label{proof:bound X_se and X_s^ce}
	We will start with $ \frac{1}{m} \sum_{(X_i, y_i) \in J^*} \widetilde{X}_i e_i$. We take the $i$-th entry of $\frac{1}{m} \sum_{(X_j, y_j) \in J^*} \widetilde{X}_j e_j$ for some $i \in S$, i.e., $| \frac{1}{m} \sum_{(X_j, y_j) \in J^*} \widetilde{X}_{ji} e_j |$. 		
	Recall that $\widetilde{X}_{ji}$ is a sub-Gaussian random variable with parameter $\sigma$ and $e_j$ is a sub-Gaussian random variable with parameter $\sigma_e$. Then, $\frac{\widetilde{X}_{ji}}{\sigma}\frac{e_j}{\sigma_e}$ is a sub-exponential random variable with parameters $(4\sqrt{2}, 2)$. Using the concentration bounds for the sum of independent sub-exponential random variables~\citep{wainwright2019high}, we can write:
	\begin{align}
		\begin{split}
			\prob( | \frac{1}{m} \sum_{(X_j, y_j) \in J^*} \frac{\widetilde{X}_{ji}}{\sigma}\frac{e_j}{\sigma_e} | \geq t) \leq 2 \exp(- \frac{m t^2}{64}), \; 0 \leq t \leq 8
		\end{split}
	\end{align} 
	Taking a union bound across $i \in S$:
	\begin{align}
		\begin{split}
			&\prob( \exists i \in S \mid | \frac{1}{m} \sum_{(X_j, y_j) \in J^*} \frac{\widetilde{X}_{ji}}{\sigma}\frac{e_j}{\sigma_e} | \geq t) \leq 2 k \exp(- \frac{m t^2}{64}), \; 0 \leq t \leq 8
		\end{split}
	\end{align}
	
	Taking $t = \frac{\lambda t}{ \sigma \sigma_e}$, we get: 
	\begin{align}
		\begin{split}
			&\prob( \exists i \in S \mid |\frac{1}{\lambda} \frac{1}{m} \sum_{(X_j, y_j) \in J^*} X_{ji} e_j | \geq t) \leq 2 k \exp(- \frac{m \lambda^2 t^2}{64 \sigma^2 \sigma_e^2}), \; 0 \leq t \leq 8 \frac{\sigma \sigma_e}{\lambda}
		\end{split}
	\end{align}
	
	It follows that $\| \frac{1}{\lambda } \frac{1}{m} \sum_{(X_i, y_i) \in J^*} \widetilde{X}_i e_i \|_{\infty} \leq  t$ with probability at least $1 - 2 k \exp(- \frac{m \lambda^2 t^2}{64 \sigma^2 \sigma_e^2})$.
	
	Using a similar argument, we can show that $\| \frac{1}{\lambda} \frac{1}{m} \sum_{(X_i, y_i) \in J^*} \overline{X}_{i} e_i \|_{\infty} \leq  t$ with probability at least $1 - 2(p - k) \exp(- \frac{m \lambda^2 t^2}{64 \sigma^2 \sigma_e^2})$. Taking $t= \frac{\kappa}{8 - 4 \kappa}$ and $\frac{\kappa}{8}$ in the first and second inequality of Lemma \ref{lem:bound X_se and X_s^ce} and choosing the provided setting of $\lambda$ and $m$ completes our proof.
\end{proof}

\end{proof}

\section{Auxiliary Lemmas and Proofs}
\label{sec:auxiliary lemmas and proofs}

\begin{lemma}
\label{lem:sample positive definite}
If Assumption \ref{assum:positive definite Hessian} holds and $m = \Omega(\frac{k + \log p}{\alpha_1^2})$, then $\frac{\alpha_1}{2} I \preceq \widehat{\Sigma}_{SS}^J \preceq  \frac{3\alpha_2}{2} I$ with probability at least $1 - \calO(\frac{1}{p})$.
\end{lemma}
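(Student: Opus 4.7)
The plan is to combine a standard sub-Gaussian concentration bound for the empirical covariance with Weyl's inequality. Fix any $J \in \calJ$. Since $J \subseteq \calM_c$, the restricted predictors $\{\widetilde{X}_i : (X_i,y_i) \in J\}$ are i.i.d.\ zero-mean sub-Gaussian vectors in $\real^k$ with population covariance $\Sigma_{SS}$ and sub-Gaussian parameter on the order of $\sigma \sqrt{\alpha_2}$ (by Assumption~\ref{assum:positive definite Hessian}, the coordinate-wise variance is bounded by $\alpha_2$). We may therefore invoke the standard covariance concentration inequality for sub-Gaussian ensembles (e.g.\ Theorem 6.5 of \citet{wainwright2019high}), which yields
\begin{align*}
\prob\!\left( \| \widehat{\Sigma}_{SS}^J - \Sigma_{SS} \|_2 \geq c_0 \sigma^2 \alpha_2 \left( \sqrt{\tfrac{k+t}{m}} + \tfrac{k+t}{m} \right) \right) \leq 2 e^{-t},
\end{align*}
for every $t > 0$, where $c_0$ is an absolute constant.

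Next, I would instantiate $t = c_1 \log p$ for a sufficiently large constant $c_1$ so that the tail probability is $\calO(1/p)$. Under the hypothesis $m = \Omega((k + \log p)/\alpha_1^2)$ (with a constant that absorbs $c_0, c_1, \sigma, \alpha_2$), both terms inside the parentheses are at most $\alpha_1/4$, and hence
\begin{align*}
\| \widehat{\Sigma}_{SS}^J - \Sigma_{SS} \|_2 \leq \frac{\alpha_1}{2}
\end{align*}
with probability at least $1 - \calO(1/p)$.

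Finally, I would apply Weyl's inequality on the endpoints of the spectrum. Using Assumption~\ref{assum:positive definite Hessian} together with the deviation bound above gives
\begin{align*}
\lambda_{\min}(\widehat{\Sigma}_{SS}^J) \geq \lambda_{\min}(\Sigma_{SS}) - \|\widehat{\Sigma}_{SS}^J - \Sigma_{SS}\|_2 \geq \alpha_1 - \tfrac{\alpha_1}{2} = \tfrac{\alpha_1}{2},
\end{align*}
and
\begin{align*}
\lambda_{\max}(\widehat{\Sigma}_{SS}^J) \leq \lambda_{\max}(\Sigma_{SS}) + \|\widehat{\Sigma}_{SS}^J - \Sigma_{SS}\|_2 \leq \alpha_2 + \tfrac{\alpha_1}{2} \leq \tfrac{3\alpha_2}{2},
\end{align*}
where the last inequality uses $\alpha_1 \leq \alpha_2$. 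This gives the desired two-sided sandwich.

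The only mild subtlety is the bookkeeping of constants: matching the concentration prefactor $c_0 \sigma^2 \alpha_2$ against the target slack $\alpha_1/2$ is what produces the $1/\alpha_1^2$ scaling in the sample complexity, and the $k + \log p$ term comes from the effective dimension $k$ plus the $\log(1/\delta)$ price paid for the $\calO(1/p)$ tail. No union bound over $\calJ$ is needed since the lemma is stated for a fixed $J$; the bound holds simultaneously for the particular data-dependent $J^*$ used elsewhere in the paper because $J^*$ is constructed from clean samples and the concentration bound is invariant under relabeling of the $m$ i.i.d.\ clean indices selected.
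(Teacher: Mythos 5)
Your proposal is correct and follows essentially the same route as the paper: both bound $\|\widehat{\Sigma}^J_{SS}-\Sigma_{SS}\|_2$ by $\alpha_1/2$ via a standard sub-Gaussian covariance concentration inequality (the paper cites Vershynin's Proposition 2.1, you cite the equivalent Wainwright Theorem 6.5) and then perturb the extreme eigenvalues by Weyl's inequality (the paper phrases this via Courant--Fischer, which amounts to the same thing), yielding the identical $m=\Omega((k+\log p)/\alpha_1^2)$ requirement and $1-\calO(1/p)$ guarantee.
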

\begin{proof}
	By the Courant-Fischer variational representation~\citep{horn2012matrix}:
	\begin{align}
		\begin{split}
			\eig_{\min}(\Sigma_{SS}) = \min_{\|y \|_2 = 1} y^T \Sigma_{SS} y &= \min_{\|y \|_2 = 1} y^T (\Sigma_{SS} - \widehat{\Sigma}^J_{SS}+ \widehat{\Sigma}^J_{SS}) y \\
			&\leq y^T (\Sigma_{SS})  - \widehat{\Sigma}^J_{SS}+ \widehat{\Sigma}^J_{SS} ) y \\
			&= y^\T (\Sigma_{SS} - \widehat{\Sigma}^J_{SS}) y + y^\T \widehat{\Sigma}^J_{SS}  y
		\end{split}
	\end{align}
	It follows that
	\begin{align}
		\begin{split}
			\eig_{\min}(\widehat{\Sigma}^J_{SS}) \geq \alpha_1 - \| \Sigma_{SS} - \widehat{\Sigma}^J_{SS} \|_2
		\end{split}
	\end{align}
	The term $ \| \Sigma_{SS} - \widehat{\Sigma}^J_{SS} \|_2$ can be bounded using Proposition 2.1 in \cite{vershynin2012close} for sub-Gaussian random variables. In particular,
	\begin{align}
		\begin{split}
			\prob(\|  \Sigma_{SS} - \widehat{\Sigma}^J_{SS} \|_2 \geq \epsilon) \leq 2  \exp(-c\epsilon^2 m + k) 
		\end{split}
	\end{align}
	for some constant $c > 0$. Taking $\epsilon = \frac{\alpha_1}{2}$, we show that $\eig_{\min}(\widehat{\Sigma}^J_{SS}) \geq \frac{\alpha_1}{2}$ with probability at least $1 - 2  \exp(- \frac{c\alpha_1^2 m}{4} + k)$. 
	
	Remark: Similarly, it can be shown that $\eig_{\max}(\widehat{\Sigma}^J_{SS}) \leq \frac{3\alpha_2}{2}$ with probability at least $1 - 2  \exp(- \frac{c\alpha_2^2 m}{4} + k)$.
\end{proof}

\begin{lemma}
\label{lem:sample mutual incoherence condition}
If Assumption \ref{assum:mutual incoherence} holds and $m = \Omega(\frac{k^3 (\log k + \log p)}{\tau(\alpha_1, \kappa, \sigma, \Sigma)})$, then $ \| \widehat{\Sigma}_{S^cS}^J \widehat{\Sigma}_{SS}^{J -1 } \|  \leq 1 - \frac{\kappa}{2}$ with probability at least $1 - \calO(\frac{1}{p})$ where $\tau(\alpha_1, \kappa, \sigma, \Sigma)$ is a constant independent of $m, n, p$ and $k$.
\end{lemma}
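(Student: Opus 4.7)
The plan is to start from the standard primal--dual witness decomposition
\[
\widehat{\Sigma}^J_{S^cS} \widehat{\Sigma}^{J\,-1}_{SS} - \Sigma_{S^cS}\Sigma_{SS}^{-1} = T_1 + T_2 + T_3,
\]
where (using $\widehat{\Sigma}^{J\,-1}_{SS} - \Sigma_{SS}^{-1} = - \widehat{\Sigma}^{J\,-1}_{SS}(\widehat{\Sigma}^J_{SS}-\Sigma_{SS})\Sigma_{SS}^{-1}$) I take $T_1 = -\Sigma_{S^cS}\Sigma_{SS}^{-1}(\widehat{\Sigma}^J_{SS}-\Sigma_{SS})\widehat{\Sigma}^{J\,-1}_{SS}$, $T_2 = (\widehat{\Sigma}^J_{S^cS}-\Sigma_{S^cS})\Sigma_{SS}^{-1}$, and $T_3 = -(\widehat{\Sigma}^J_{S^cS}-\Sigma_{S^cS})\Sigma_{SS}^{-1}(\widehat{\Sigma}^J_{SS}-\Sigma_{SS})\widehat{\Sigma}^{J\,-1}_{SS}$. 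By the triangle inequality in the induced $\|\cdot\|_{\infty,\infty}$ norm it then suffices to show $\|T_1\|_{\infty,\infty} + \|T_2\|_{\infty,\infty} + \|T_3\|_{\infty,\infty} \leq \kappa/2$, since Assumption~\ref{assum:mutual incoherence} supplies $\|\Sigma_{S^cS}\Sigma_{SS}^{-1}\|_{\infty,\infty}\leq 1-\kappa$.

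The workhorse concentration step is an entrywise bound on the sample covariance. For any fixed $(a,b)$, the product $(X_{ia}/\sqrt{\Sigma_{aa}})(X_{ib}/\sqrt{\Sigma_{bb}})$ is sub-exponential with parameters $(4\sqrt{2},2)$, so Bernstein-type bounds for sums of independent sub-exponentials (as used in Lemma~\ref{lem:bound X_se and X_s^ce}) give
\[
\prob\bigl(|\widehat{\Sigma}^J_{ab} - \Sigma_{ab}| \geq t\bigr) \leq 2\exp\!\bigl(-c\, m t^2 /(\sigma^2\,\Sigma_{aa}\Sigma_{bb})\bigr)
\]
for $t$ in the sub-Gaussian regime. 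A union bound over all $O(p^2)$ entries then yields $\|\widehat{\Sigma}^J-\Sigma\|_{\max}\leq t$ with probability $1-\mathcal{O}(1/p)$ whenever $m t^2 \gtrsim \log p$, i.e. $t \asymp \sqrt{(\log p)/m}$ up to constants depending on $\alpha_1,\sigma,\Sigma$.

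Each $T_j$ is then bounded by translating this max-entry control into the $\|\cdot\|_{\infty,\infty}$ (max absolute row-sum) norm. Since the inner dimensions are of size $k$, each row sum contributes a factor of $k$, and further $\|A\|_{\infty,\infty} \leq \sqrt{k}\,\|A\|_2$ for matrices with $k$ columns; multiplied against the inverse $\Sigma_{SS}^{-1}$ whose $\|\cdot\|_{\infty,\infty}$ is controlled by $\sqrt{k}/\alpha_1$ via Assumption~\ref{assum:positive definite Hessian}, and against $\widehat{\Sigma}^{J\,-1}_{SS}$ whose spectral norm is bounded by $2/\alpha_1$ on the same event as in Lemma~\ref{lem:sample positive definite}, I obtain bounds of the form $\|T_1\|_{\infty,\infty},\|T_2\|_{\infty,\infty} \leq C(\alpha_1,\sigma,\Sigma)\, k^{3/2}\,\sqrt{(\log p)/m}$ and $\|T_3\|_{\infty,\infty}\leq C'\, k^{2}\,(\log p)/m$, where the $k^{3/2}$ comes from combining a row-sum factor $k$ with one $\sqrt{k}$ from a spectral-to-$\ell_\infty$ conversion. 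Requiring each to be at most $\kappa/6$ forces $m = \Omega(k^3 (\log p)/\tau(\alpha_1,\kappa,\sigma,\Sigma))$; if the set $J$ is not fixed in advance but chosen from $\calJ$, the union bound inflates the failure probability by ${r \choose m}\leq \binom{n}{k}$-type terms absorbed into the extra $\log k$ in the statement.

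The main obstacle will be tracking the $k$-dependence carefully. Getting all three of $T_1,T_2,T_3$ down to $O(k^{3/2}\sqrt{\log p/m})$ in $\|\cdot\|_{\infty,\infty}$ requires choosing the right mix of entrywise-max concentration and operator-to-$\ell_\infty$ conversions, and the bottleneck term $T_1$ (with both $\Sigma_{SS}^{-1}$ and $\widehat{\Sigma}^{J\,-1}_{SS}$) is what drives the $k^3$; any looser inequality (e.g.\ bounding $\|\widehat{\Sigma}^J-\Sigma\|_{\infty,\infty}$ directly via Hsu--Kakade--Zhang-type inequalities) would land at $k^4$ or worse. The dependence of $\tau$ on $\alpha_1,\kappa,\sigma,\Sigma$ is then just the collected product of the constants arising in the three bounds.
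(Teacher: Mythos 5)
Your proof follows essentially the same route as the paper's: the identical Ravikumar-style decomposition of $\widehat{\Sigma}^J_{S^cS}(\widehat{\Sigma}^J_{SS})^{-1}$ into the population term $\Sigma_{S^cS}\Sigma_{SS}^{-1}$ (controlled by Assumption~\ref{assum:mutual incoherence}) plus three deviation terms, each driven below $\kappa/6$ via entrywise sub-exponential concentration, union bounds over entries, and $\sqrt{k}\,\|\cdot\|_2$-to-$\|\cdot\|_{\infty,\infty}$ conversions, which is exactly where the $k^3\log p$ requirement on $m$ arises in the paper as well. The one inaccuracy is your closing remark that a union bound over $J\in\calJ$ can be absorbed into the $\log k$ factor --- $|\calJ|=\binom{r}{m}$ is exponential in $m$, so it cannot --- but the lemma (like the paper's proof) is stated for a given $J$, so this aside does not affect the argument.
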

\begin{proof}
	Before we prove the result of Lemma \ref{lem:sample mutual incoherence condition}, we will prove a helper lemma. 
	\begin{lemma}
		\label{lem:helper mutual incoherence}
		If Assumption \ref{assum:mutual incoherence} holds then for some $\delta > 0$, the following inequalities hold:
		\begin{align}
			\label{eq:helper mutual incoherence}
			\begin{split}
				&\prob( \| \widehat{\Sigma}^J_{S^cS} - \Sigma_{S^cS} \|_{\infty} \geq \delta ) \leq 4 (p - k) k \exp( - \frac{m \delta^2}{128 k^2 (1+4\sigma^2) \max_{l} \Sigma_{ll}^2}) \\
				&\prob( \| \widehat{\Sigma}^J_{SS} - \Sigma_{SS} \|_{\infty} \geq \delta ) \leq 4 k^2 \exp( - \frac{m \delta^2}{128 k^2 (1+4\sigma^2) \max_{l} \Sigma_{ll}^2}) \\
				&\prob( \| (\widehat{\Sigma}^J_{SS})^{-1} - (\Sigma_{SS})^{-1} \|_{\infty} \geq \delta ) \leq 2\exp(- \frac{c\delta^2 \alpha_1^4 m}{4 k} + k)+ 2 \exp( - \frac{c\alpha_1^2 m}{4} + k)
			\end{split}
		\end{align}
	\end{lemma}
	\begin{proof}
		\label{proof:helper mutual incoherence}
		Let $A_{ij}$ be $(i, j)$-th entry of $\widehat{\Sigma}^J_{S^cS} - \Sigma_{S^cS}$. Clearly, $\E(A_{ij}) = 0$. By using the definition of the $\| \cdot\|_{\infty}$ norm, we can write:
		\begin{align}
			\begin{split}
				\prob(\| \widehat{\Sigma}^J_{S^cS} - \Sigma_{S^cS} \|_{\infty} \geq \delta) &= \prob(\max_{i \in S^c} \sum_{j \in S} |A_{ij}| \geq \delta) \\
				& \leq (p - k) \prob(\sum_{j \in S} |A_{ij}| \geq \delta) \\
				&\leq (p - k) k \prob(|A_{ij}| \geq \frac{\delta}{k})
			\end{split}
		\end{align}
		where the second last inequality comes as a result of the union bound across entries in $S^c$ and the last inequality is due to the union bound across entries in $S$. Recall that $X_i, i \in \{1,\cdots,p\}$ are zero mean random variables with covariance $\Sigma$ and each $\frac{X_i}{\sqrt{\Sigma_{ii}}}$ is a sub-Gaussian random variable with parameter $\sigma$. Using the results from Lemma 1 of \cite{ravikumar2010high}, for some $\delta \in (0, k \max_{l} \Sigma_{ll} 8(1 + 4 \sigma^2))$, we can write:
		\begin{align}
			\begin{split}
				\prob(|A_{ij}| \geq \frac{\delta}{k}) \leq 4 \exp( - \frac{m \delta^2}{128 k^2 (1+4\sigma^2) \max_{l} \Sigma_{ll}^2})
			\end{split}
		\end{align} 
		Therefore,
		\begin{align}
			\begin{split}
				&\prob(\| \widehat{\Sigma}^J_{S^cS} - \Sigma_{S^cS} \|_{\infty} \geq \delta)  \leq 4 (p - k) k \exp( - \frac{m \delta^2}{128 k^2 (1+4\sigma^2) \max_{l} \Sigma_{ll}^2})
			\end{split}
		\end{align}
		Similarly, we can show that 
		\begin{align}
			\begin{split}
				&\prob(\| \widehat{\Sigma}^J_{SS} - \Sigma_{SS} \|_{\infty} \geq \delta)  \leq 4 k^2 \exp( - \frac{m \delta^2}{128 k^2 (1+4\sigma^2) \max_{l} \Sigma_{ll}^2})
			\end{split}
		\end{align}
		Next, we will show that the third inequality in \eqref{eq:helper mutual incoherence} holds. Note that
		\begin{align}
			\begin{split}
				\| (\widehat{\Sigma}^J_{S^cS})^{-1} - (\Sigma_{S^cS})^{-1} \|_{\infty} &= \| (\Sigma_{SS})^{-1} (\Sigma_{SS} - \widehat{\Sigma}^J_{SS}) (\widehat{\Sigma}^J_{SS})^{-1} \|_{\infty} \\
				&\leq \sqrt{k}  \| (\Sigma_{SS})^{-1} (\Sigma_{SS} - \widehat{\Sigma}^J_{SS}) (\widehat{\Sigma}^J_{SS})^{-1} \|_{2} \\
				&\leq \sqrt{k}  \| (\Sigma_{SS})^{-1} \|_2 \| (\Sigma_{SS} - \widehat{\Sigma}^J_{SS})\|_2 \| (\widehat{\Sigma}^J_{SS})^{-1} \|_{2} \\
			\end{split}
		\end{align}   
		Note that $\| \Sigma_{SS} \|_2 \geq \alpha_1$, thus $\| (\Sigma_{SS})^{-1} \|_2 \leq \frac{1}{\alpha_1}$. Similarly,  $\| \Sigma_{SS} \|_2 \geq \frac{\alpha_1}{2}$ with probability at least $1 - 2 \exp( - \frac{c\alpha_1^2m}{4} + k)$. We also have $\| (\Sigma_{SS} - \widehat{\Sigma}^J_{SS})\|_2 \leq \epsilon$ with probability at least $1 - 2\exp(-c\epsilon^2 m + k)$. Taking $\epsilon = \delta \frac{\alpha_1^2}{2 \sqrt{k}}$, we get 
		\begin{align}
			\begin{split}
				\prob( \| (\Sigma_{SS} - \widehat{\Sigma}^J_{SS})\|_2 \geq  \delta \frac{\alpha_1^2}{2 \sqrt{k}} ) \leq 2\exp(- \frac{c\delta^2 \alpha_1^4 m}{4 k} + k)
			\end{split}
		\end{align}
		It follows that $\| (\widehat{\Sigma}^J_{SS})^{-1} - (\Sigma_{SS})^{-1} \|_{\infty} \leq \delta$ with probability at least $1 - 2\exp(- \frac{c\delta^2 \alpha_1^4 m}{4 k} + k) - 2 \exp( - \frac{cC_{\min}^2 m}{4} + k)$.
	\end{proof}
	Now we are ready to show that the statement of Lemma \ref{lem:sample mutual incoherence condition} holds using the results from Lemma \ref{lem:helper mutual incoherence}. We will rewrite $\widehat{\Sigma}^J_{S^cS} (\widehat{\Sigma}^J_{SS})^{-1}$ as the sum of four different terms:
	\begin{align}
		\begin{split}
			\widehat{\Sigma}^J_{S^cS} (\widehat{\Sigma}^J_{SS})^{-1} = T_1 + T_2 + T_3 + T_4,
		\end{split}
	\end{align} 
	where 
	\begin{align}
		\begin{split}
			T_1 &\triangleq \widehat{\Sigma}^J_{S^cS} ( (\widehat{\Sigma}^J_{SS})^{-1} - (\Sigma_{SS})^{-1} ) \\
			T_2 &\triangleq (\widehat{\Sigma}^J_{S^cS} - \Sigma_{S^cS}) (\Sigma_{SS})^{-1} \\
			T_3 &\triangleq (\widehat{\Sigma}^J_{S^cS} - \Sigma_{S^cS})((\widehat{\Sigma}^J_{SS})^{-1} - (\Sigma_{SS})^{-1}) \\
			T_4 &\triangleq \Sigma_{S^cS} (\Sigma_{SS})^{-1}   \, .
		\end{split}
	\end{align}
	Then it follows that  $\| \widehat{\Sigma}^J_{S^cS} (\widehat{\Sigma}^J_{SS})^{-1} \|_{\infty} \leq \| T_1 \|_{\infty} + \| T_2 \|_{\infty} + \| T_3 \|_{\infty} + \| T_4 \|_{\infty}$. Now, we will bound each term separately. First, recall that Assumption \ref{assum:mutual incoherence} ensures that $\| T_4 \|_{\infty} \leq 1 - \kappa$.
	\paragraph{Controlling $T_1$.} We can rewrite $T_1$ as,
	\begin{align}
		\begin{split}
			T_1 = - \Sigma_{S^cS} (\Sigma_{SS})^{-1} (\widehat{\Sigma}^J_{SS} - \Sigma_{SS}) (\widehat{\Sigma}^J_{SS})^{-1}
		\end{split}
	\end{align}
	then,
	\begin{align}
		\begin{split}
			\| T_1 \|_{\infty} &= \| \Sigma_{S^cS} (\Sigma_{SS})^{-1} (\widehat{\Sigma}^J_{SS} - \Sigma_{SS}) (\widehat{\Sigma}^J_{SS})^{-1} \|_{\infty} \\
			&\leq \| \Sigma_{S^cS} (\Sigma_{SS})^{-1} \|_{\infty} \| (\widehat{\Sigma}^J_{SS} - \Sigma_{SS})\|_{\infty} \| (\widehat{\Sigma}^J_{SS})^{-1} \|_{\infty} \\
			&\leq (1 - \kappa)  \| (\widehat{\Sigma}^J_{SS} - \Sigma_{SS})\|_{\infty} \sqrt{k} \| (\widehat{\Sigma}^J_{SS})^{-1} \|_2 \\
			&\leq (1 - \kappa) \| (\widehat{\Sigma}^J_{SS} - \Sigma_{SS})\|_{\infty} \frac{2\sqrt{k}}{\alpha_1} \\
			&\leq \frac{\kappa}{6} 
		\end{split}
	\end{align}
	The last inequality holds with probability at least $1 - 2\exp( - \frac{c \alpha_1^2 m}{4} + k) - 4 k^2 \exp( - \frac{m \alpha_1^2 \kappa^2 }{18432(1-\kappa)^2 k^3(1 + 4\sigma^2) \max_{l} \Sigma_{ll}^2} )$ by taking $\delta = \frac{\alpha_1 \kappa}{12 (1 - \kappa) \sqrt{k}}$.
	\paragraph{Controlling $T_2$.} Recall that $T_2 = (\widehat{\Sigma}^J_{S^cS} - \Sigma_{S^cS}) (\Sigma_{SS})^{-1}$. Thus,
	\begin{align}
		\begin{split}
			\| T_2 \|_{\infty} &\leq \sqrt{k} \| (\Sigma_{SS})^{-1} \|_2 \| (\widehat{\Sigma}^J_{S^cS} - \Sigma_{S^cS}) \|_{\infty} \\
			&\leq \frac{\sqrt{k}}{\alpha_1}   \| (\widehat{\Sigma}^J_{S^cS} - \Sigma_{S^cS}) \|_{\infty} \\
			&\leq \frac{\kappa}{6}
		\end{split}
	\end{align}
	The last inequality holds with probability at least $1 - 4(p-k)k \exp( -\frac{m \alpha_1^2 \kappa^2 }{4608 k^3 (1 + 4\sigma^2) \max_{l} \Sigma_{ll}^2 } )$ by choosing $\delta = \frac{\alpha_1 \kappa}{6 \sqrt{k}}$.
	\paragraph{Controlling $T_3$.} Note that,
	\begin{align}
		\begin{split}
			\| T_3 \|_{\infty} &\leq \| (\widehat{\Sigma}^J_{S^cS} - \Sigma_{S^cS}) \|_{\infty} \| ((\widehat{\Sigma}^J_{SS})^{-1} - (\Sigma_{SS})^{-1}) \|_{\infty} \\
			&\leq \frac{\kappa}{6}
		\end{split}
	\end{align}
	The last inequality holds with probability at least $1 - 4(p-k)k \exp(- \frac{m \kappa}{768k^2 (1 + 4\sigma^2) \max_{l} \Sigma_ll^2 }) - 2\exp(- \frac{c\kappa \alpha_1^4 m}{24 k} + k) - 2 \exp( - \frac{c\alpha_1^2 m}{4} + k)$ by choosing $\delta = \sqrt{\frac{\kappa}{6}}$ in the first and third inequality of equation \eqref{eq:helper mutual incoherence}. By combining all the above results, we prove Lemma \ref{lem:sample mutual incoherence condition}. 
\end{proof}

\section{Effects of outlier proportion on recovery}
\label{sec:effects of outlier proportion}

Most of the existing literature in outlier-robust regression provide theoretical guarantees for the recovery of regression parameter vector which depend on the proportion ($\mu$) of outlier samples~\citep{liu2020high,chen2013robust}. \cite{liu2020high} provides a bound on the norm-error ($\| \widehat{\theta} - \theta^* \|_2$) in the order of $\calO(\max(\frac{1}{\sqrt{n}}, \mu ))$ in Corollary 3.1 and the note below Corollary 3.1, where $\mu=\epsilon$ in their notation. Similarly, \cite{chen2013robust} provides a bound of $\calO(\frac{1}{\sqrt{n}} + \mu )$ in Theorem 3 where $\mu=n_1/n$ and $n_1$ is the number of outliers in their notation. The norm-error for these methods asymptotically go to zero if the proportion of outliers go to zero asymptotically.  This restricts the applicability of these methods to a small regime (left to the vertical black line in Figure~\ref{fig:recwithproportion}). For example, for $\| \widehat{\theta} - \theta^* \|_2$ to have a reasonable rate of $\calO(\frac{1}{\sqrt{n}})$, \citep{liu2020high,chen2013robust} require the outlier proportion $\mu$ to be $\calO(\frac{1}{\sqrt{n}})$. Asymptotically, the proportion of outliers ($\frac{1}{\sqrt{n}}$) approaches zero as the total number of samples ($n$) approaches infinity, which makes it less interesting. We found empirically that even the standard lasso performs decently well in this regime. 

\begin{figure*}[!ht]
	\centering
	\begin{subfigure}{.5\textwidth}
		\centering
		\includegraphics[width=\linewidth]{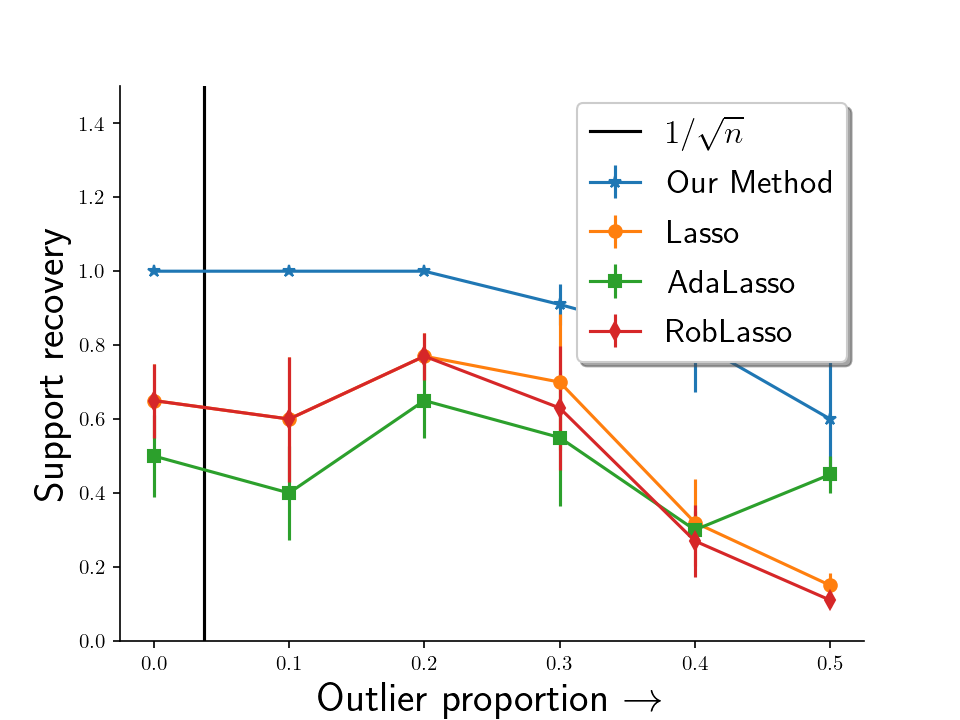}
		\caption{}
		\label{fig:recsupport}
	\end{subfigure}%
	\begin{subfigure}{.5\textwidth}
		\centering
		\includegraphics[width=\linewidth]{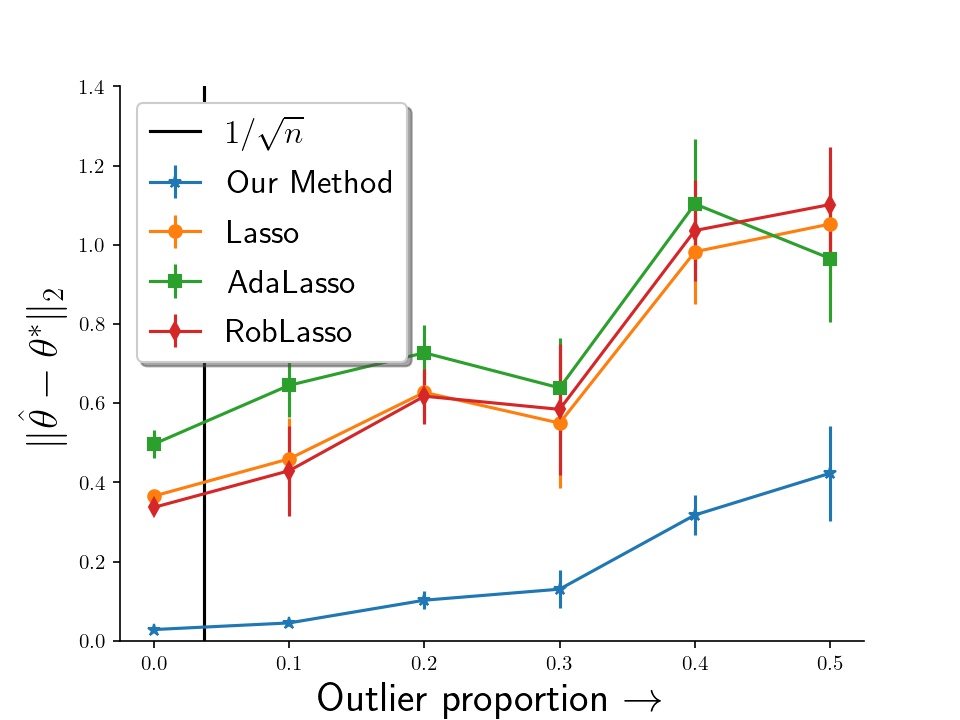}	
		\caption{}
		\label{fig:recerror}
	\end{subfigure}
	\caption{(Left) Support recovery with outlier proportion for $p=30, k=4$. (Right) Norm error for $p=30, k=4$ against outlier proportion. Comparison to standard lasso, AdaLasso~\citep{lambert2011robust} and RobLasso~\citep{chen2013robust} is also shown.}
	\label{fig:recwithproportion}
\end{figure*}

Figure~\ref{fig:recwithproportion} shows that all methods perform reasonably well when number of outliers are small but their performance deteriorates when we increase the number of outliers (the results are averaged across $5$ independent runs.). It can also be seen that our method outperforms all the other methods in all the regimes. Thus, we extend the theoretical guarantees of outlier-robust recovery to a broader and unexplored regime of constant outlier proportions.

\end{document}